\newcommand{\inputspace}{\ensuremath{\mathcal{X}}} 
\newcommand{\labelspace}{\ensuremath{\mathcal{Y}}} 
\pgfplotsset{compat=1.17}
\def\ConvColor{rgb:yellow,5;red,2.5;white,5}
\def\ReluColor{rgb:yellow,1;red,5;white,6}
\def\AvgColor{rgb:blue,3;red,3;white,8}
\def\AbsColor{rgb:green,3;white,8}
\def\GrayColor{rgb:black,1;white,4}
\DeclareMathOperator*{\argmin}{arg\,min}
\theoremstyle{plain}
\newtheorem{theorem}{Theorem}[section]
\newtheorem{lemma}[theorem]{Lemma}
\theoremstyle{definition}
\theoremstyle{remark}
\newif\ifdraft
\newcommand{\blue}[1]{\ifdraft{\leavevmode\color{blue}{{#1}}}\else{{#1}}\fi}
\title{Quantification using Permutation-Invariant Networks based on Histograms}
\author{ \href{https://orcid.org/0000-0002-4527-6698}{\includegraphics[scale=0.06]{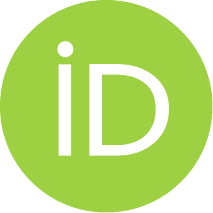}\hspace{1mm}Olaya Pérez-Mon}\\
	Artificial Intelligence Center\\
	University of Oviedo\\
	Gijón, 33204, Asturias, Spain \\
	\texttt{UO257319@uniovi.es} \\
	\And
	\href{https://orcid.org/0000-0002-0377-1025}{\includegraphics[scale=0.06]{orcid.pdf}\hspace{1mm}Alejandro Moreo} \\
	Istituto di Scienza e Tecnologie dell’Informazione\\
	Consiglio Nazionale delle Ricerche\\
	Pisa, 56124, Italy \\
	\texttt{alejandro.moreo@isti.cnr.it} \\
        \And
	\href{https://orcid.org/0000-0002-4288-3839}{\includegraphics[scale=0.06]{orcid.pdf}\hspace{1mm}Juan José del Coz} \\
	Artificial Intelligence Center\\
	University of Oviedo\\
	Gijón, 33204, Asturias, Spain \\
	\texttt{juanjo@uniovi.es} \\
        \And
	\href{https://orcid.org/0000-0002-9250-0920}{\includegraphics[scale=0.06]{orcid.pdf}\hspace{1mm}Pablo González} \\
	Artificial Intelligence Center\\
	University of Oviedo\\
	Gijón, 33204, Asturias, Spain \\
	\texttt{gonzalezgpablo@uniovi.es} \\
}
\begin{document}
\maketitle

\begin{abstract}Quantification, also known as class prevalence estimation, is the supervised learning task in which a model is trained to predict the prevalence of each class in a given bag of examples. This paper investigates the application of deep neural networks to tasks of quantification in scenarios where it is possible to apply a symmetric supervised approach that eliminates the need for classification as an intermediary step, directly addressing the quantification problem.  
Additionally, it discusses existing permutation-invariant layers designed for set processing and assesses their suitability for quantification. In light of our analysis, we propose HistNetQ, a novel neural architecture that relies on a permutation-invariant representation based on histograms that is specially suited for quantification problems. 
Our experiments carried out in the only quantification competition held to date, show that HistNetQ outperforms other deep neural architectures devised for set processing, as well as the state-of-the-art quantification methods.
Furthermore, HistNetQ offers two significant advantages over traditional quantification methods: i) it does not require the labels of the training examples but only the prevalence values of a collection of training bags, making it applicable to new scenarios; and ii) it is able to optimize any custom quantification-oriented loss function.
\end{abstract}

\keywords{quantification \and prevalence estimation \and deep learning \and deep neural networks}

\section{Introduction}
\label{sec:introduction}

In many real-world applications \citep{Beijbom2015,Forman2006,gonzalez2019automatic,Hopkins2010,Moreo:2022bf,dias2022classification}, predicting the class of each individual example in a dataset is of little concern, since the real interest lies in the \emph{aggregate} level, i.e., in estimating the prevalence of the classes in a bag of examples. Quantification, also known as class prevalence estimation, is the supervised learning task that tackles this particular problem \citep{Gonzalez2017}. 
Quantification has already proven useful in a wide variety of fields, providing answers to questions as for example:
\emph{what is the percentage of positive, neutral, and negative reviews for a specific product of a given company?} \citep{Moreo:2022bf} or \emph{what is the percentage of plankton organisms belonging to each of the phytoplankton species in this water sample?} \citep{gonzalez2019automatic}.

This learning problem can be formalized as follows. Let  $\labelspace=\{c_j\}_{j=1}^l$ be the classes of interest, the goal is to learn a quantifier: $q: \mathbb{N}^\inputspace \rightarrow \Delta^{l-1}$, i.e., a functional $q\in\mathcal{Q}$
that, given a test bag 
$B=\{\mathbf{x}_i\}_{i=1}^m$ in which $\mathbf{x}_i \in \inputspace$ 
is a vector of features representing a data example, 
returns a vector of class prevalence estimations $q(B)\in\Delta^{l-1}$, where $\Delta^{l-1}=\{(p_{1}, \ldots,p_{l}) \mid p_{j}\in[0,1], \sum_{j=1}^{l} p_{j}=1\}$ represents the probability simplex, 
i.e., the domain of all vectors representing probability distributions over $\labelspace$. We will use $\mathbf{p}_B\in \Delta^{l-1}$ to indicate the true prevalence values of a bag 
$B$, and $\hat{\mathbf{p}}_{B}^{A}\in \Delta^{l-1}$ to indicate the estimated prevalence values predicted by the quantification algorithm  $A$, so that $p_{B}(c_j)$ and $\hat{p}_{B}^{A}(c_j)$ are the true and the predicted class prevalence, for class $c_j$, respectively.

At first glance, quantification seems a task very similar to classification in spirit. Indeed, the most straightforward solution to the quantification problem, called Classify \& Count (CC) in the literature, comes down to first learning a hard classifier $h: \inputspace \rightarrow \labelspace$ using a training dataset $D=\{(\mathbf{x}_i,y_i)\}_{i=1}^n$ drawn from $\inputspace \times \labelspace$, to then issue label predictions for all examples in the test bag $B$, and finally counting the number of times each class has been attributed. However, it has been observed that CC gives rise to biased estimators of class prevalence \citep{Forman2008}. The reason is that $h$ is biased towards the training prevalence and therefore tends to underestimate (resp. overestimate) the true prevalence of a class when this class becomes more prevalent (resp. less prevalent) in the test bag $B$ than it was in the training set $D$. 
Noticeably, most quantification algorithms rely on the predictions of a classifier\footnote{
Other alternatives exist which instead rely directly on the features of the examples (the covariates) \citep{GonzalezCastro2013,kawakubo2016computationally}; however, the literature has shown that these approaches tend to be less competitive.} which are subsequently post-processed using information from $D$ and $B$. 
This post-processing is necessary since, in quantification, we assume to face a shift in the data distribution (i.e., that the prevalence of the classes may differ between $D$ and $B$).

This particular shift is generally known as ``label shift" or ``prior probability shift" \citep{quionero2009dataset}, according to which the prior distribution $P(Y)$ 
can change between training and deployment conditions, while the class-conditional densities $P(X|Y)$ 
are assumed stationary. 
The fact that CC is not suitable for quantification under prior probability shift conditions has led to the development of a myriad of methods designed specifically for quantification, which is by now recognized as a task on its own right (see, e.g., \citet{gonzalez2017review, quantbook2023} for an overview).

One of the main advantages of adopting deep neural network architectures (DNNs) for quantification is that DNNs allow the learning process to handle bags of examples (labeled by their class prevalence values) instead of individual examples (labeled by class). 
\blue{Following this intuition, a change in the learning paradigm with respect to the traditional one was first proposed in \citet{Qi2021}.
In this paper, we offer an in-depth exploration of the implications of this change of paradigm, by analyzing the main advantages and limitations with respect to traditional approaches to quantification.
Conversely, traditional quantification methods adopt an \emph{asymmetric} approach in which a classifier is trained to infer the class of the individual examples and in which the label predictions are used to estimate the prevalence of the classes in the bag. This way,} the training labels (\blue{class labels} attached to the example) and the labels to be predicted (\blue{class prevalence values} attached to the bag) are not \emph{homologous}. In contrast, following the approach proposed in \citet{Qi2021}, we can reframe the quantification problem as a \emph{symmetric} supervised learning task in which the training set consists of a collection of bags containing examples labeled at the aggregate level (i.e., without individual class labels).
This formulation posits the quantification problem as a multivariate regression task, in which the labels provided for training and the labels we need to predict become homologous. 
Throughout this paper, we will demonstrate further advantages of this formulation. Among them, and in contrast to traditional quantification methods, the quantifier becomes capable of optimizing any specific loss function. 

With this aim, our paper investigates the application of DNNs to the symmetric quantification problem.
\blue{The paper begins by addressing a central issue that arises when making predictions for entire bags rather than for individual examples, namely, how to represent bags in a permutation-invariant manner}
\citep{edwards2016towards,janossypoolICLR2019,wagstaff2019limitations}.
Two influential DNN architectures have been proposed for set processing: DeepSets \citep{zaheer2017deep} and SetTransformers \citep{lee2019set}. The former employs a pooling layer like max,  average, or median, to summarize each bag,
while the latter uses a transformer architecture without positional encoding. 
These approaches were designed as universal approximation functions for set-based problems. 
Here, we propose a new architecture, called HistNetQ, relying on histogram-based layers. The rationale why histograms seem promising is two-fold: histograms are naturally geared towards representing densities and convey more information than plain statistics (like the mean, or median). We will show that histogram-based layers can be seen as a generalization of the pooling layers proposed in \citet{zaheer2017deep,Qi2021}. 

The contributions of this paper are three-fold. 
First, we analyze the symmetric approach \blue{of \citet{Qi2021}} for quantification, discussing its strengths and limitations.
Secondly, we empirically assess the suitability of previously proposed permutation-invariant layers to the quantification problem. 
Finally, we propose HistNetQ, a new permutation-invariant architecture based on differentiable histograms, specifically useful for quantification tasks.

Our experiments show two main results: i) HistNetQ outperforms not only \blue{traditional quantification methods and previous general-purpose} DNN architectures for set processing but also state-of-the-art \blue{quantification-specific DNN methods \citep{Esuli2018,Qi2021} in the LeQua~\citep{lequa2022} competition, the only competition entirely devoted to quantification held to date}, ii) HistNetQ proves competitive also under the asymmetric approach too, that is, when a set of training bags is not available and must be generated from $D$ via sampling. 


\section{Related Work}
\label{sec:related}

\blue{This section briefly describes the most important quantification methods based on the asymmetric approach as well as DNN architectures specifically designed to handle set-based data. }


\subsection{Quantification Methods}
\label{sec:traditional}

The Adjusted Classify and Count (ACC) method (see~\citet{Vaz:2019eu,Forman2008}), later renamed as 
Black Box Shift Estimation \blue{``hard'' (BBSE-hard) in \citet{lipton2018detecting}}, learns a classifier $h$ and then applies a correction relying on the law of total probability:
%
\begin{align}
  \label{eq:AC} 
  p(h(\mathbf{x})=c_{i}) = \sum_{c_{j}\in
  \labelspace}p(h(\mathbf{x})=c_{i}|c_{j})\cdot p(c_{j}),
\end{align}
which corresponds to the following linear system:
%
\begin{align}
  \label{eq:AC2} 
  \hat{\mathbf{p}}^{\mathrm{CC}}_{B} = \mathbf{M}_h
  \cdot \mathbf{p},
\end{align}
\noindent where $\hat{\mathbf{p}}^\mathrm{CC}_{B}$ are the prevalence estimates returned by the CC method for the test bag $B$ and $\mathbf{M}_h$ is the misclassification matrix characterizing $h$, that is, $m_{ij}$ is the probability that $h$ predicts $c_{i}$ if the true class is $c_{j}$. $\mathbf{M}_h$ is unknown but can be estimated via cross-validation. ACC comes down to solving (\ref{eq:AC2}) as $\hat{\mathbf{p}}^{\mathrm{ACC}}_B = \hat{\mathbf{M}}_h^{-1}\cdot \hat{\mathbf{p}}^{\mathrm{CC}}_{B}$ if $\mathbf{M}_h$ is invertible; otherwise, the Penrose pseudoinverse can be used \citep{Bunse:2022oj}.

%
In \citet{Bella2010}, the authors propose two probabilistic variants of CC and ACC, that consist of replacing the hard classifier $h$ with a soft classifier $s: \inputspace \rightarrow \Delta^{l-1}$, thus giving rise to Probabilistic Classify \& Count (PCC): 
%

\begin{equation}
    \label{eq:pcc}
    \hat{\mathbf{p}}_{B}^{\mathrm{PCC}} = \frac{\sum_{\mathbf{x}\in B}s(\mathbf{x})}{{|B|}},
\end{equation}
and Probabilistic Adjusted Classify and Count (PACC) \blue{(also known as BBSE-soft in \citet{lipton2018detecting})}:   

\begin{equation}
    \label{eq:PAC} 
    \hat{\mathbf{p}}^{\mathrm{PACC}}_{B} = \hat{\mathbf{M}}_s^{-1} \cdot \hat{\mathbf{p}}^{\mathrm{PCC}}_{B}.
  \end{equation}

The Expectation Maximization for Quantification (EMQ) \citep{Saerens2002} method applies the EM algorithm to adjust the posterior probabilities generated by a soft classifier $s$ to the potential shift in the label distribution 
by iterating over a mutually recursive step of expectation (in which the posteriors are updated) and maximization (in which the priors are updated) until convergence. 
The literature has convincingly shown that EMQ is a ``hard to beat'' quantification method \citep{alexandari2020maximum,esuli2020critical}. However, the performance of EMQ heavily relies on the quality of the posterior probabilities generated by $s$ (i.e., on the fact that these posterior probabilities are well-calibrated). \blue{For this reason, different calibration strategies have been proposed in the literature; among these, the Bias-Corrected Temperature Scaling (BCTS) calibration proved the best of the lot \citep{alexandari2020maximum}. In the experiments of Section~\ref{sec:experiments}, we will consider two variants of EMQ: one in which the posterior probabilities are not recalibrated and another in which we apply BCTS.}

The HDy method \citep{GonzalezCastro2013} uses a combination of histograms to represent the distributions of the training data $D$ and the test bag $B$, using the Hellinger Distance (HD) to compare them. HDy builds the histograms using the posterior probabilities returned by a soft classifier $s$. 
Figure~\ref{fig:hdy} illustrates the inner workings of the HDy method. In the training phase, the distributions of the posteriors returned by $s$ for the positive and negative examples in the training set $D$ are estimated using histograms $D^+$ and $D^-$ respectively. At test time, the posteriors of the test bag $B$ are computed and represented using the same procedure. HDy will then return the prevalence value $\hat{p}$ that minimizes the HD between the mixture and the test bag distributions, solving the following optimization problem:
\begin{equation}
\label{eq:hdyeq}
\argmin_{\hat{p} \in [0,1]} HD(  \hat{p}  \cdot D^{+} + (1-\hat{p}) \cdot  D^{-} \ , \ B \ ).    
\end{equation}
\begin{figure}[t]
\centering
\includegraphics[width=0.55\columnwidth]{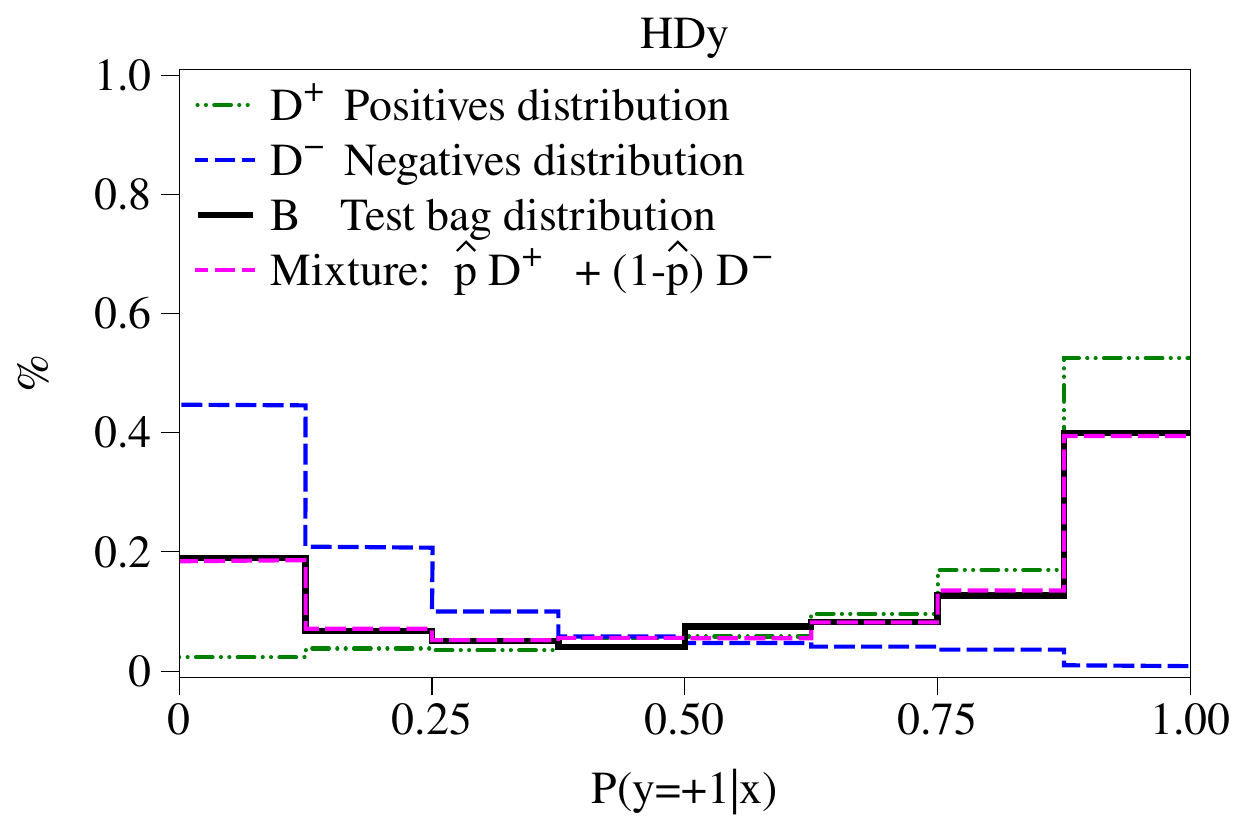}
\caption {In this example, we observe the distributions of positive cases (green) and negative cases (blue) within the training dataset $D$. Additionally, we can see the mixture distribution (magenta) that provides the best approximation of the test bag distribution (black). 
}
\label{fig:hdy}
\end{figure}

QuaNet is a DNN architecture for binary quantification \citep{Esuli2018}.  
QuaNet sorts the inputs by their posterior probabilities and processes the sequence using a bi-directional LSTM that learns a predictor of class prevalence. The prevalence estimation is then combined with the estimates computed with some base quantification methods (CC, ACC, PCC, PACC, and EMQ). QuaNet then generates many bags out of the training data $D$ to train the model. However, in contrast to the rest of the DNN architectures that this paper analyzes, QuaNet follows the asymmetric approach and requires (just like all quantification methods discussed in this section) the availability of a training set $D$ with individual example 
labels.
 
A more exhaustive description of these (and other) quantification algorithms can be found in \citet{quantbook2023,gonzalez2017review}.

\subsection{DNN Architectures for Sets}
\label{sec:DNNs}

In recent years, dedicated DNNs have been proposed to handle set-based data. Even though these architectures were not originally devised with class prevalence estimation in mind, they seem apt for the task since they all construct on top of permutation-invariant representations. 
Quantification requires permutation-invariant layers, because the prevalences of $B$ do not change if the examples in $B$ are shuffled.


The first of these architectures is called DeepSets \citep{zaheer2017deep}. 
DeepSets relies on different permutation-invariant pooling operators, like max, average or median. Pooling operators are applied to the features representing the examples 
in a given bag $B$. An operator is said to be \emph{permutation-invariant} when the output of the layer is not affected by the order in which the examples 
appear in the (serialized) input sequence 
$S$. More formally, a function $f$ is permutation-invariant if $f(S)=f(\pi(S))$ for any permutation function $\pi$. 
\blue{In \citet{Qi2021}, the authors use the same architecture and pooling layers as in DeepSets, proposing its application to quantification problems. For the sake of clarity, we will refer to the use of simple pooling layers, as max, average or median, as DeepSets. }



In \citet{lee2019set} one step forward was taken
by replacing the simple pooling operators of DeepSets with transformers, i.e., with attention-based mechanisms that model complex interactions between the elements in the set. 
In this architecture, called SetTransformers, positional encoding is not included since the order of the examples 
in the bag 
is unimportant. Instead of modeling the interactions between every possible pair of examples, 
SetTransformers incorporates the concept of \emph{inducing points}, learnable latent data points of the vector space 
that is given as input to the self-attention mechanism. 
In this way, 
the original $\mathcal{O}(n^2)$ complexity of SetTransfomer is reduced to $\mathcal{O}(n I)$, where $n$ is the bag 
size and $I$ (with $I\ll n$) the number of inducing points. 
To the best of our knowledge, SetTransformers have never been used in quantification.

\section{Symmetric Quantification: A Case Study Analysis}
\label{sec:shiftparadigm}

\blue{While \citet{Qi2021} pioneered the symmetric approach to the field of quantification learning, the authors did not delve deeper into the implications of the new approach. Among other things, this section aims at filling this gap by providing a comprehensive analysis of its main advantages and limitations.}

Most previous quantification algorithms (as for example those described in Section~\ref{sec:traditional}) require a training dataset $D$, in which labels are attached to individual examples, in order to learn a quantifier $q \in \mathcal{Q}, \ q: \mathbb{N}^\inputspace \rightarrow \Delta^{l-1}$ that, given a test bag $B$, computes estimates of class prevalence. Therefore, the learning device is of the form $L : (\inputspace \times \labelspace)^n \rightarrow \mathcal{Q}$, meaning that the quantification problem is posed as an asymmetric task: training labels are defined in $\labelspace$ while predictions are probability distributions from $\Delta^{l-1}$.
\blue{In order to reformulate quantification as a symmetric supervised task the training set needs to be defined as} $D'=\{(B_i, \mathbf{p}_i)\}_{i=1}^{n'}$, with $B_i\in \mathbb{N}^\inputspace$ a training bag labeled according to its class prevalence values $\mathbf{p}_i \in \Delta^{l-1}$.
The learning device is thus formalized as $L' : (\mathbb{N}^\inputspace \times \Delta^{l-1})^{n'} \rightarrow \mathcal{Q}$,
so that the labels provided for training and the labels we need to predict become homologous, i.e., are both probability distributions in $\Delta^{l-1}$.

This reformulation presents some advantages and disadvantages that were not discussed in \citet{Qi2021}. 
The first advantage \blue{of the new approach is that the quantification method is no longer necessarily bound to prior probability shift. This is a major implication, since most previously proposed methods in the quantification literature assume to be in presence of prior probability shift, and are specifically devised to counter it. In contrast, by adopting the symmetric approach, training examples can potentially exhibit any type of shift, to which the method at hand will try to develop resilience as part of the learning procedure. This characteristic is significant, as it considerably broadens the applicability of the quantification method to scenarios beyond prior probability shift.}

The second advantage is that the quantification problem is addressed \emph{directly}, and not via classification as an intermediate step.
This should be advantageous by virtue of Vapnik's principle, 
according to which \textit{``If you possess a restricted amount of information for solving some problem, try to solve the problem directly and never solve a more general problem as an intermediate step. It is possible that the available information is sufficient for a direct solution but is insufficient for solving a more general intermediate problem''}. Notice that all the methods described in Section~\ref{sec:traditional} (with the sole exception of QuaNet) do not \blue{directly learn a model by minimizing a task-oriented loss (as is rather customary in other areas of supervised machine learning).
The reason is that methods like ACC, PACC, EMQ, and HDy undertake an asymmetric training in which a classifier is learned, and then a predefined post-processing function is employed to yield prevalence estimates. As a result, most quantification methods proposed so far are agnostic to specific quantification loss functions. }
In contrast, methods based on the symmetric approach (including HistNetQ) \blue{can be specifically tailored to minimize a quantification-oriented loss function. This is important as different applications may be characterized by different notions of criticality; well-designed loss functions play a crucial role in accurately reflecting these notions, thereby enabling a method to become accurate in terms of application-dependent requirements. For example (a) one may opt for adopting the absolute error (AE) as an easily interpretable metric in general cases; (b) in applications related to epidemiology, estimating the prevalence of rare diseases might be better served by the relative absolute error (RAE); (c) in different contexts, employing a cost-sensitive error measure could help weigh the relative importance of different classes. See \citep{Sebastiani2020} for a broader discussion on evaluation measures for quantification.}

The third advantage is a widening of the range of problems to which quantification can be applied. Current quantification algorithms can not be applied to problems in which labels are provided at the aggregate level (i.e., datasets of ``type $D'$''). 
Problems in which the supervised training data naturally arise in the form of sets labeled by prevalence are many, and are the object of study of research areas like multi-instance learning \citep{multiinstancereview2010}, and learning from label proportions (LLP) \citep{Freitas:2005qf,Quadrianto:2009lc}. Examples of these problems include, for instance, post-electoral results by census tract,\footnote{See, e.g., the PUMS (public use microdata sample) of the U.S. Census Bureau \url{https://www.census.gov/data/datasets/2000/dec/microdata.html}} 
demographic analysis in which sensible information (e.g., race, gender) is anonymized but provided at the aggregate level, or public records of proportions of diagnosed diseases per ZIP code. Notice that the symmetric approach enables tackling these problems directly.

However, the symmetric approach faces at least two important issues. The first one is that the number of \blue{available training} bags in $D'$ may be limited in some applications. \blue{This limitation arises because supervised learning requires abundant labeled data. While the previous approach requires labeling individuals (i.e., instances), the symmetric approach requires labeling populations (i.e., bags of instances); the latter is certainly more demanding to obtain.} 
\blue{In Section~\ref{sec:sample-mixer} we present a method aimed at mitigating this issue that consists of generating new synthetic bags from existing ones.} 

The second aspect concerns the applicability of the symmetric approach to cases in which \blue{the only available training set is a traditional one, i.e., a dataset of ``type $D$'' with individual class labels. However, note that such a setup poses no real limitation to the symmetric approaches since a} dataset of ``type $D'$'' can be easily obtained from a dataset of ``type $D$'' \blue{via sampling.} Section~\ref{sec:app-protocol} discusses \blue{one sampling generation protocol that fulfill this requirement; the protocol is well-known in the quantification literature although it is more commonly employed for evaluation purposes, i.e., for generating, out of a collection of labelled individuals, many test bags exhibiting different class distributions that are used for testing quantification algorithms}. 
\blue{Of course, while feasible in principle, it remains to be seen whether a symmetric approach trained via a sampling protocol performs comparably in terms of quantification accuracy with respect to a traditional asymmetric approach trained on the original dataset.}
This aspect will be analyzed in the experiments of Section~\ref{sec:experiments}.

\begin{figure}[t!]
\centering

\begin{minipage}{.01\linewidth}
\center \scriptsize \mbox{}
\end{minipage}
\begin{minipage}{.37\linewidth}
\center \scriptsize AE
\end{minipage}
\begin{minipage}{.37\linewidth}
\center \scriptsize RAE
\end{minipage}

\begin{minipage}{.01\linewidth}
\center \scriptsize \rotatebox{90}{T1A}
\end{minipage}
\begin{minipage}{.37\linewidth}
\center \includegraphics[width=\linewidth]{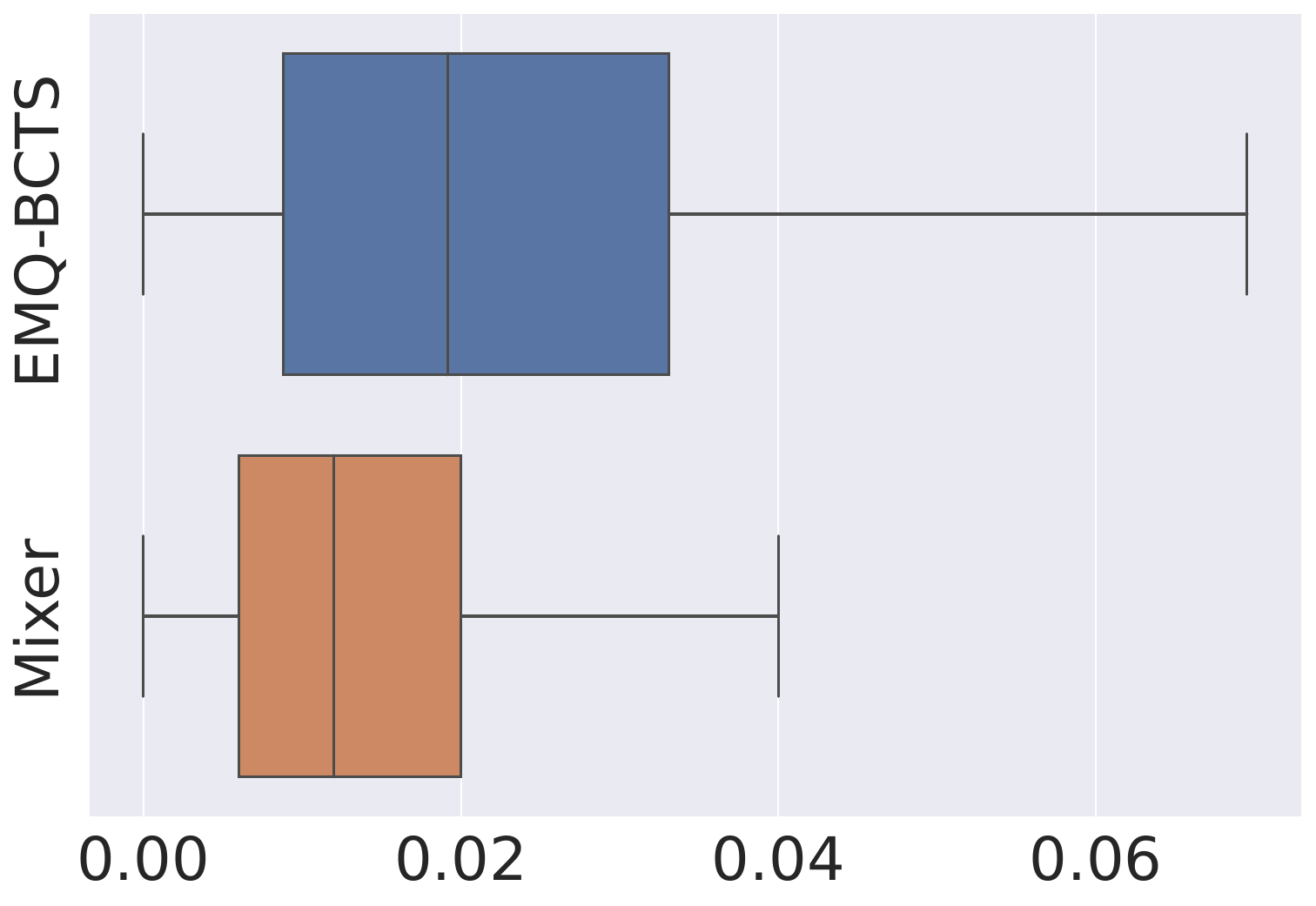}
\end{minipage}
\begin{minipage}{.37\linewidth}
\center \includegraphics[width=\linewidth]{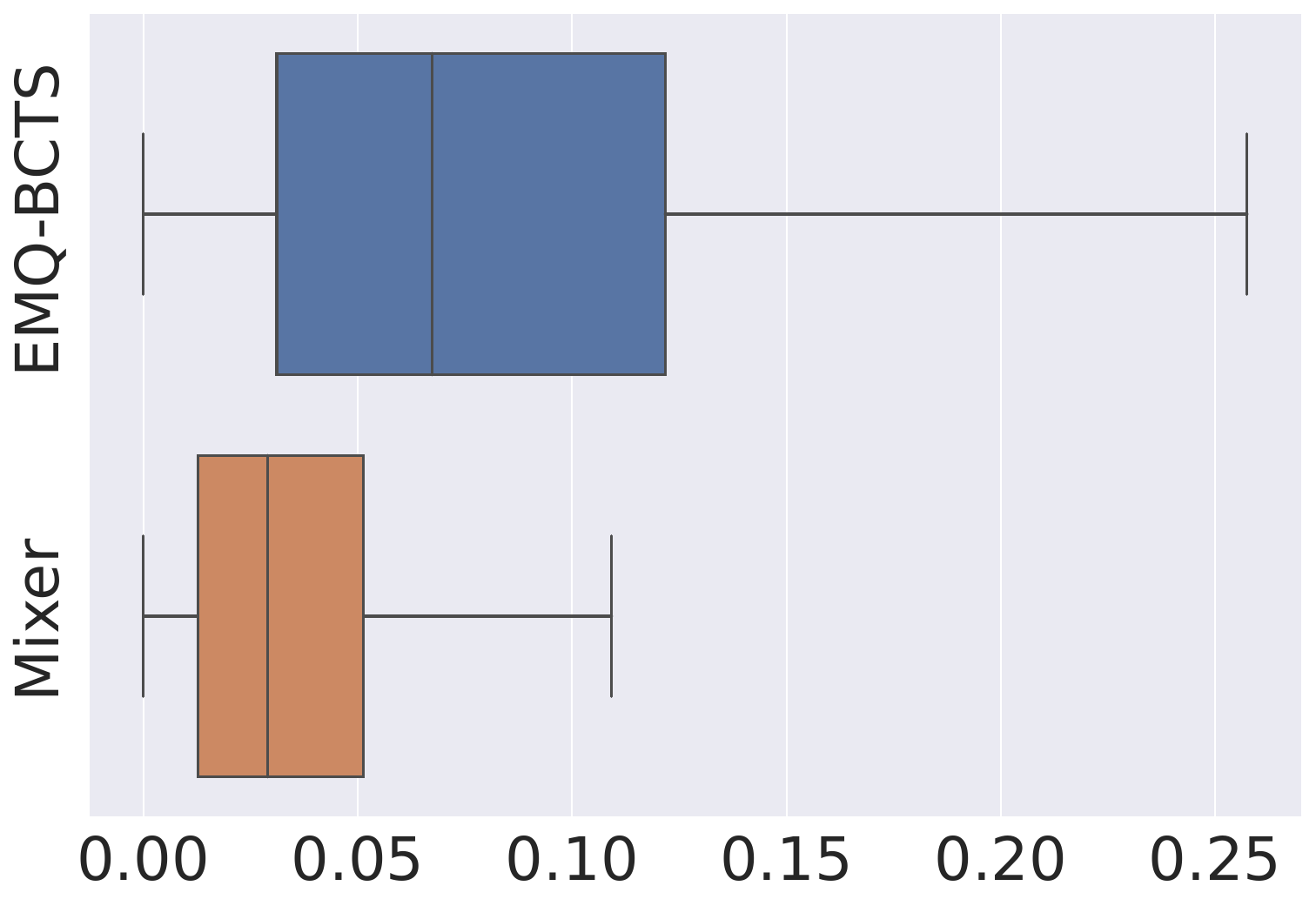}
\end{minipage}

\begin{minipage}{.01\linewidth}
\center \scriptsize \rotatebox{90}{T1B}
\end{minipage}
\begin{minipage}{.37\linewidth}
\center \includegraphics[width=\linewidth]{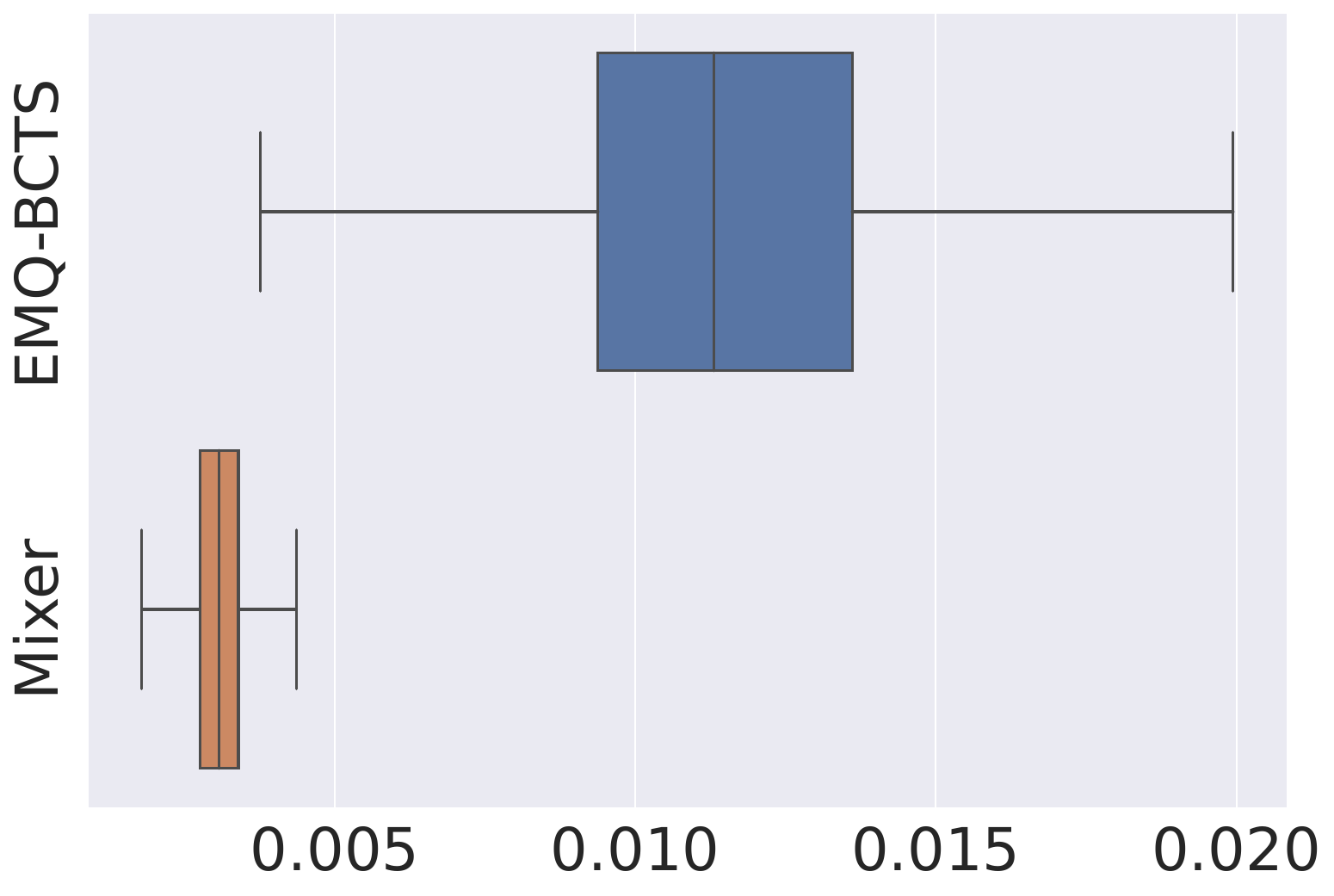}
\end{minipage}
\begin{minipage}{.37\linewidth}
\center \includegraphics[width=\linewidth]{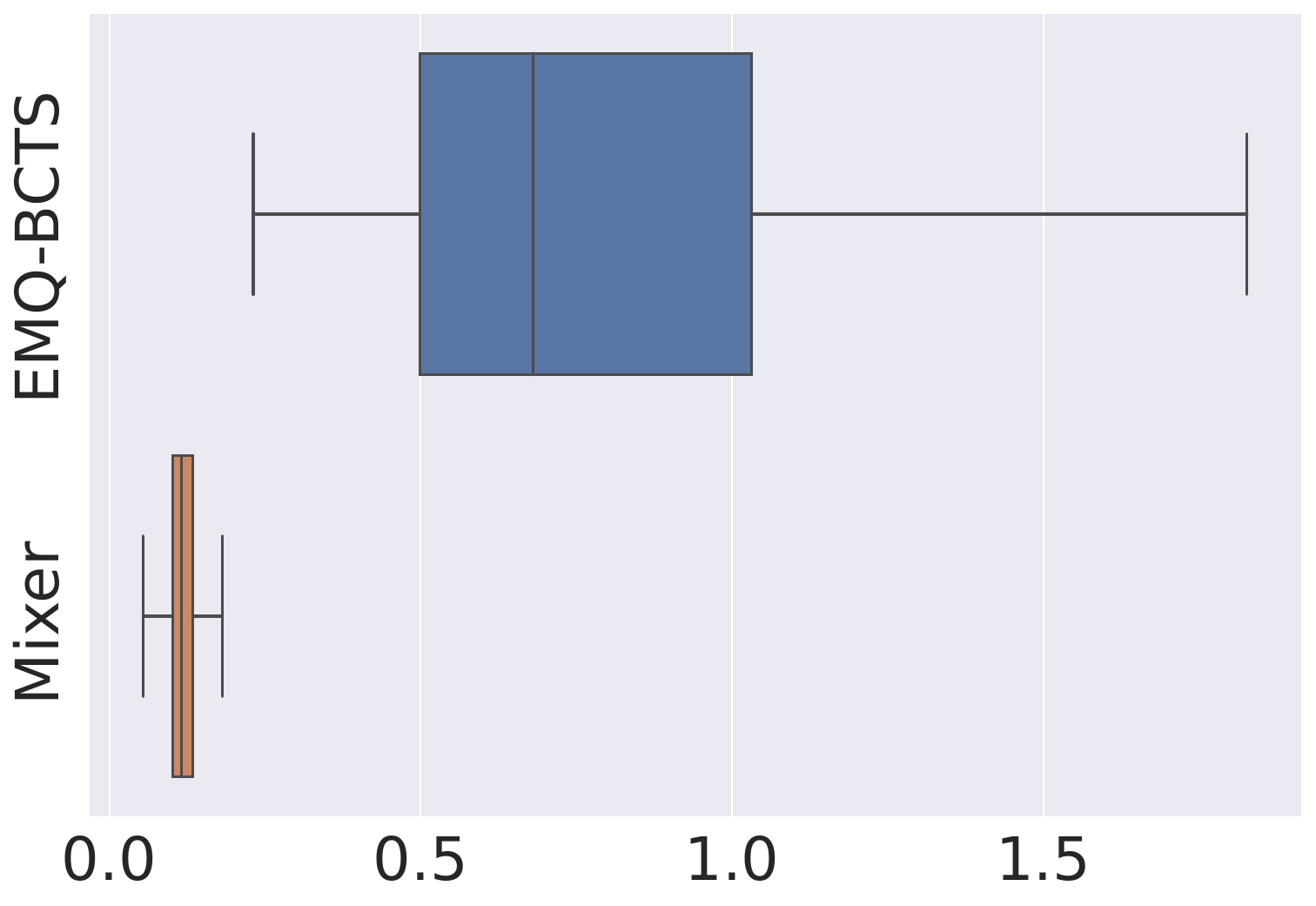}
\end{minipage}

\caption{
Distribution of errors produced by EMQ-BCTS and ``Mixer'' heuristic in terms of Absolute Error (AE) and Relative AE (RAE) as evaluated in LeQua datasets T1A (top row) and T1B (bottom row) (see more details in Section~\ref{sec:experiments}). 
EMQ-BCTS was trained and optimized using, respectively, the training and validation sets, and evaluated in the corresponding test bags, 
while for Mixer we run Montecarlo simulations generating bags 
out of the training examples 
of each task.}
\label{fig:errormixer}
\vskip 0.1in
\end{figure}

\subsection{Bag Mixer: data augmentation for quantification}
\label{sec:sample-mixer}


\blue{Arguably} the most important issue the symmetric approach has to face concerns the potential limited size of $D'$, something that might easily lead to overfitting, especially if DNN methods are used. The reason why is that training bags are the equivalent counterparts of training examples from a classification problem. This means that even a relatively high number of training bags (e.g., the LeQua datasets we use in the experiments of Section~\ref{sec:experiments} comprise 1000 bags each) remains quite low when compared to classification datasets customarily used in deep learning \blue{(that typically comprise tens or hundreds of thousands of instances)}.

One possible solution to this problem comes down to generating new bags out of the original ones via subsampling and mixing. Of course, while we are able to generate new bags out of the examples 
in our dataset, we do not know the (gold) true prevalence of the newly generated bags. 
However, we can guess it and label our new bags 
with (silver) prevalence values instead. The heuristic we propose is called ``Mixer'' and works as follows: given a dataset of type $D'$, at each epoch we generate new training bags 
$(B, \hat{\mathbf{p}})$, from the original ones, in which $B=B'_i\bigcup B'_j$, where $B'_i$ (resp. $B'_j$) is a random subset containing half of the elements of $B_i$ (resp. $B_j$) and $\hat{\mathbf{p}}=(|B'_i|\mathbf{p}_i + |B'_j|\mathbf{p}_j)/(|B'_i|+|B'_j|)$,  
and in which bags 
$B_i$ and $B_j$ are chosen randomly from our original dataset $D'$. Bags generated with the Mixer are fed into the network along with real bags from $D'$. The proportion of real bags used for each iteration is controlled by one hyperparameter.

This heuristic certainly introduces some noise in the labels of the newly generated bags. 
However, we have verified that it is typically much smaller than the error that other surrogate quantifiers would produce if employed in place of the heuristic for estimating the bag 
prevalence (see the experiment in Figure~\ref{fig:errormixer}). 
We use the Bag Mixer for training all DNN methods.

\subsection{Generating a collection of bags from $D$}
\label{sec:app-protocol}

Many experiments in quantification papers use benchmark datasets borrowed from
classification problems.
In these datasets, testing bags are not naturally provided so we generate them artificially
for testing quantification algorithms.
This way, a sampling protocol is employed to generate a sufficiently large collection of testing bags,
$D'\!=\!\{(B_i, \mathbf{p}_i)\}_{i=1}^{m'}$, with $B_i\in \mathbb{N}^\inputspace$ and $\mathbf{p}_i \in \Delta^{l-1}$, from a labeled classification test set 
$T=\{(\mathbf{x}_i,y_i)\}_{i=1}^m$. The most widely adopted sampling protocol is called Artificial-Prevalence Protocol (APP) \citep{Forman2005} which is designed to simulate prior probability shift. APP consists of drawing a fixed number of bags 
in which the bag 
prevalence $\mathbf{p}_i$ is uniformly drawn at random from the probability simplex $\Delta^{l-1}$, and the testing bag 
$B_i$ for each class prevalence $\mathbf{p}_i$ is generated from $T$ via random sampling with replacement, trying to maintain $P(X|Y)$ constant. In order to draw prevalence vectors uniformly at random, we use the Kraemer algorithm \citep{smith2004sampling}

\blue{Note that the APP protocol is also useful for generating a training dataset of ``type $D'$'' from a training dataset of ``type $D$''; that is, when there are no dedicated training bags available but we want to train DNN-based methods using the symmetric approach.}


\blue{While APP is specialized in generating prior probability shift,} notice that if we have some prior knowledge about the application at hand, other sampling protocols designed for reproducing the expected shift could be applied in place \citep{zhang2013domain}.

\section{HistNetQ: Differentiable Histograms}
\label{sec:histnetq}

In this paper, we propose a permutation-invariant layer for quantification that gains inspiration from histograms.
Histograms represent powerful tools for describing sets of values: they are directly aligned with the concept of counting, and they disregard the order in which the values are presented.
However, histograms are not differentiable operators and hence cannot be directly employed as building blocks in a deep learning model.
In order to overcome this impediment, histograms can be approximated by using common differentiable operations such as convolutions and pooling layers.
Different realizations of this intuition have been reported in the literature of computer vision \citep{avi2020deephist,peeples2021histogram,wang2016learnable} but, to the best of our knowledge, no one before has investigated differentiable histograms in quantification.

Previous attempts for devising differentiable histograms differ in how these are implemented. On the one hand,  
\citet{wang2016learnable, peeples2021histogram} proposed soft variants 
in which every value can potentially contribute to more than one bin, based on the distance of the value to the center of the bin and the width thereof.
On the other hand,  in \citet{yusuf2020differentiable} the authors propose a hard variant, that is, every value only contributes to the bin in which the value falls.
Throughout preliminary experiments we carried out using all variants, we found that the differences in performance were rather small.
The hard variant proved slightly better in such experiments (in terms of validation loss) and is our variant of choice for the experiments of Section~\ref{sec:experiments}. Other architectures and their results are discussed in the supplementary material.

More formally, given a bag 
of $n$ data examples 
$B=\{\mathbf{x}_i\}_{i=1}^{n}$, with $\mathbf{x}_i\in\inputspace$, our goal is to compute a histogram for every feature vector $\{\mathbf{f}_k\}_{k=1}^z$, where $\mathbf{f}_k\in\mathbb{R}^n$ represents the values of the $k$-th feature across the $n$ instances in the bag $B$,
and where $z$ is the number of features extracted \blue{(i.e., every histogram is computed along a different column from a $n\times z$ matrix representing $B$)}.
The hard differentiable histogram layer proposed 
in \citet{yusuf2020differentiable}
takes a user-defined hyperparameter $N$ determining the (fixed) number of bins (we use the same number of bins for all feature vectors), and defines $\{(\mu_b^{(k)}, w_b^{(k)})\}_{b=1}^{N}$, the bin centers and widths, as independent learnable parameters for each feature vector $\mathbf{f}_k$. 
The value in the $b$-th bin of the $k$-th histogram is defined by:

\begin{equation}
    H_b^{(k)}(B)=\frac{1}{n}\sum_{i=1}^n \phi(\mathbf{f}_k [i]; \mu_b^{(k)},w_b^{(k)}),
\end{equation}
where $\phi$ is defined by:
\begin{equation}
    \label{eq:binning}
    \phi(v;\mu,w)=
    \begin{cases}
     0, & \text{if}\; 1.01^{w-|v-\mu|}\leq 1\\
     1, & \text{otherwise.}
    \end{cases}
\end{equation}
The value $1.01$ in Equation~\ref{eq:binning} is \blue{justified in \citet{yusuf2020differentiable}} simply \blue{as} a value that yields slightly smaller values than $1$ when the exponent is $<0$ and slightly bigger values than $1$ if the exponent is $>0$.
This, in combination with a threshold operation, results in a (differentiable) mechanism to detect which values fall into which bin (see Figure~\ref{fig:hist} for a graphical representation of the layer).

Note that we compute densities (by dividing the counts by $n$) and not plain counts, in order to factor out the effect of the bag 
size in the final representation. Note also that the total number of parameters of a differentiable histogram layer is $2 N z$.
Since the bin centers and widths are learnable, the output can contain interval ``gaps'' (i.e., intervals in which values are not taken into account), interval overlaps \blue{(thus allowing one value to contribute to more than one overlapping bin at the same time)}, or even zero-width bins.
This means that the output of the layer is not strictly a histogram, but this allows the model to control the complexity of the representation (should $N$ be too high, the model can well learn to overlap bins or create zero-width ones).

It is worth noting that the quantification method HDy, described in Section~\ref{sec:traditional}, \blue{also relies on histograms.} 
However, there are significant differences between HDy and HistNetQ. To begin with, \blue{HistNetQ models histograms on the latent representations of the (potentially high-dimensional) data, whereas HDy models histograms on the posterior probabilities returned by a soft classifier.}
Also, as HistNetQ uses a symmetric approach and learns directly from bags, it does not need to impose any learning assumption, while HDy instead relies the prior probability shift assumptions. Lastly, HistNetQ enables the optimization of a specific loss function during the learning process, \blue{while this is not possible in HDy.}

\begin{figure}[!tbp]
\centering
\begin{tikzpicture}[
    node distance=5mm and 4 mm,
      start chain = going right,
every node/.style = {
    draw,
    minimum height=4em,
    text width=3.5em,
    align=center,
    join},
every join/.style = {->}
            ]
        \node[on chain,fill=\GrayColor] (featuremaps){\scriptsize Feature\\maps};
        \node[on chain,text width=3em]  (conv1)[fill=\ConvColor,below=of featuremaps]{\scriptsize Conv. I\\[0.5em] \tiny \textbf{weights:} fixed 1\\[0.5em]\textbf{bias: }$-\mu$\\};
        \node[on chain,fill=\AbsColor,text width=1.6em]  {\scriptsize Abs};
        \node[on chain,text width=3em]  (conv2)[fill=\ConvColor]{\scriptsize Conv. II\\[0.5em] \tiny \textbf{weights:} fixed -1\\[0.5em]\textbf{bias: }$w$\\};
        \node  [on chain,fill=\ReluColor,text width=1.6em](exp){\scriptsize Exp};
        \node  [on chain,fill=\ReluColor,text width=2.5em]{\scriptsize Thres-\\hold};
        \node  [on chain,fill=\AvgColor,text width=2.5em](avg){\scriptsize Global avg. pooling};
        \node  [on chain,above=of avg,fill=\GrayColor,right=5.85cm of featuremaps](histograms){\scriptsize Features Histograms};
         \draw[red,dotted,thick] ($(conv1.north west)+(-0.1,0.2)$)  rectangle  ($(avg.south east)+(0.1,-0.2)$);
         \coordinate (CENTER) at ($(conv1)!0.5!(avg)$);
        \node  [draw=none,align=center,text width=20em,shift={(0,0.5)},above =of CENTER,]{};
    \end{tikzpicture}
    \caption{Learnable histogram layer with hard binning and learnable 
    bin centers and widths. The individual components are common operations used in DL frameworks that we use to compute Equation~\ref{eq:binning}.}
    \label{fig:hist}
\end{figure}
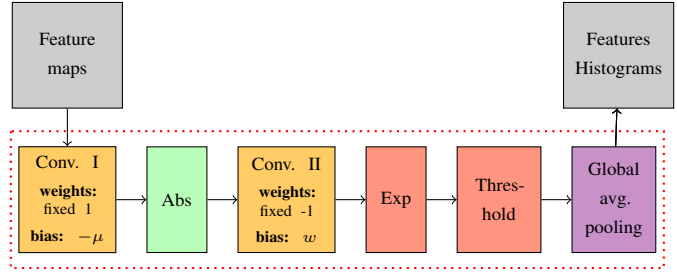
\vspace{1em}
\begin{lemma} Hard differentiable histogram layers are permutation-invariant.
\label{lemma:perminvariant}
\end{lemma}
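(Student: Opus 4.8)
The plan is to reduce the statement to the elementary fact that a finite sum is invariant under reordering of its summands. The output of the layer on a bag $B=\{\mathbf{x}_i\}_{i=1}^n$ is the collection of values $\{H_b^{(k)}(B)\}$, one for each feature index $k$ and bin index $b$, and each such value is, by its definition, the average over the instances of the per-instance contributions $\phi(\mathbf{f}_k[i];\mu_b^{(k)},w_b^{(k)})$. Since aggregation over the instances is the only place where the ordering of the bag could possibly enter the computation, establishing invariance of each individual $H_b^{(k)}$ suffices to establish invariance of the entire layer output.

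First I would make precise how a permutation acts. Serializing $B$ as an $n\times z$ matrix whose $i$-th row is $\mathbf{x}_i$, a permutation $\pi$ reorders the rows, which induces the same permutation on every column; thus the feature vector $\mathbf{f}_k$ is replaced by its reordering $\mathbf{f}_k\circ\pi$, having $\mathbf{f}_k[\pi(i)]$ in position $i$. Crucially, the bin parameters $\mu_b^{(k)}$ and $w_b^{(k)}$ are learnable weights of the layer that do not depend on the input ordering, and the normalizing factor $1/n$ is likewise independent of $\pi$.

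Next I would compute $H_b^{(k)}(\pi(B))$ directly. Substituting the reordered feature vector gives $H_b^{(k)}(\pi(B))=\frac{1}{n}\sum_{i=1}^n\phi(\mathbf{f}_k[\pi(i)];\mu_b^{(k)},w_b^{(k)})$. Because $\pi$ is a bijection on $\{1,\dots,n\}$, reindexing the sum by $j=\pi(i)$ leaves the multiset of summands unchanged, and by commutativity and associativity of addition the sum equals $\frac{1}{n}\sum_{j=1}^n\phi(\mathbf{f}_k[j];\mu_b^{(k)},w_b^{(k)})=H_b^{(k)}(B)$. Since this holds for every bin $b$, every feature $k$, and an arbitrary permutation $\pi$, the layer output is unchanged, which is exactly the definition of permutation invariance given in Section~\ref{sec:DNNs}.

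The argument carries no genuine difficulty; the only step requiring care is the bookkeeping of the setup, namely verifying that the differentiable implementation of Figure~\ref{fig:hist} (the two convolutions with fixed weights, the absolute value, the exponential, the threshold, and the final global average pooling) indeed realizes the stated formula for $H_b^{(k)}$, with each operation applied either pointwise per instance or as the concluding order-agnostic pooling. Once it is granted that the layer computes this average, permutation invariance follows immediately from commutativity of the sum; this same reasoning also explains why the identical proof applies verbatim to the soft histogram variants, since they too aggregate per-instance contributions through a plain sum.
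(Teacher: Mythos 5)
Your proof is correct and follows essentially the same route as the paper's: both reduce the claim to the fact that the sum $\frac{1}{n}\sum_{i=1}^n \phi(\mathbf{f}_k[i];\mu_b^{(k)},w_b^{(k)})$ is unchanged when the entries of $\mathbf{f}_k$ are permuted, since addition is commutative and the bin parameters do not depend on the input ordering. Your version merely spells out the reindexing $j=\pi(i)$ and the action of $\pi$ on the $n\times z$ matrix more explicitly than the paper does.
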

\begin{proof}
The proof is straightforward. The value $H_b^{(k)}(B)$ 
is computed by summing over the values returned by the $\phi$ function. 
Although $\pi(B)$, with $\pi$ any permutation function, alters the order of the values within the feature vectors $\mathbf{f}_k$,
this ordering does not affect the final counts 
since: 
$$\frac{1}{n}
\sum_{i=1}^n 
\phi(\mathbf{f}_k[i];
\mu_b^{(k)}, w_b^{(k)}) = \frac{1}{n} \sum_{i=1}^n \phi(\pi(\mathbf{f}_k)[i]; \mu_b^{(k)}, w_b^{(k)}),$$ and hence $H_b^{(k)}(B)=H_b^{(k)}(\pi(B))$.
\end{proof}






One of the claims of the paper is that polling layers like average, median, or max proposed for set operations \citep{zaheer2017deep,Qi2021} can be seen as simplified models (or ablations) of our proposal of using histogram layers \blue{(in other words, that a histogram subsumes the information conveyed by these statistics)}. In order to verify this, we designed a toy experiment where a small neural network is trained \blue{to learn each of the aggregation functions (average, median, and max). To this aim, we equip our network with a single histogram layer of 64 bins,} 
followed by just two fully connected layers (sizes 32 and 16). The network is then trained on randomly generated vectors of 100 real values between $[0,max]$, where $max$ is a random number in the range $[0,1]$.
The absolute errors are pretty low: $0.0055$ (average), $0.0090$ (median), and $0.0219$ (max) suggesting that histograms are richer representations than the average, median, or max.
%
%
As the histogram layer can capture the distribution of the data, it provides a more comprehensive view of the data beyond single summary statistics, something that makes them a promising approach for machine learning tasks that require a density estimation method over sets. 

\section{Experiments}
\label{sec:experiments}

We have performed two main experiments.\footnote{The source code for reproducing the experiments is available at \url{https://github.com/a2032/a2032}}
The most important one was based on the datasets\footnote{\url{https://zenodo.org/record/5734465}} provided for the LeQua 2022 quantification competition \citep{lequa2022}. These datasets permitted us to make a perfect comparison between asymmetric and symmetric methods. The LeQua competition consists of four subtasks of product reviews quantification: two subtasks (T2A and T2B) having to do with raw text documents, and two subtasks (T1A and T1B) in which documents were already converted into numerical vectors ($\inputspace \subset \mathbb{R}^{300}$) by the organizers. We focused on T1A and T1B subtasks since we are unconcerned with textual feature extraction in this paper. 
T1A is a binary task of estimating the prevalence of positive versus negative opinions. The organizers provided a training dataset $D$ with 5,000 labeled opinions, a validation set $D'$ with 1,000 bags 
of 250 unlabeled opinions annotated by prevalence, and 5,000 testing bags 
of 250 opinions each. T1B is a multiclass task of estimating the prevalence of 28 merchandise product categories, and consists of a training set $D$  with 20,000 labeled opinions, a validation set $D'$ with 1,000 bags of 1,000 unlabeled documents annotated by prevalence, and 5,000 testing bags of 1,000 documents.

We trained our DNN methods using the validation bags $D'$, in line with the symmetric approach (Section~\ref{sec:shiftparadigm}), while traditional quantification methods (Section~\ref{sec:traditional}) were trained using the training set $D$. Notice that, \blue{as could be expected in most applicative domains, } the size of the latter \blue{(i.e., the number of labelled instances)} is larger than the size of the former \blue{(i.e., the number of labelled bags). In order to compensate this shortage of training bags, we employ the Bag Mixer (Section~\ref{sec:sample-mixer}) to train all DNN methods.} 

The target loss function of the LeQua competition was the relative absolute error:
\begin{equation}
    \label{eq:rae}
    RAE(\mathbf{p},\hat{\mathbf{p}})=\frac{1}{|\labelspace|}\sum_{c_i\in\labelspace}\frac{|\delta(p(c_i))-\delta(\hat{p}(c_i))|}{\delta(p(c_i))}, 
\end{equation}
in which $\delta(p_i) = \frac{p_{i}+\epsilon}{|\labelspace|\epsilon+1}$ \blue{is the smoothing function}, with $\epsilon$ \blue{the smoothing factor that we set to $(2|B|)^{-1}$ following \citet{Forman2008}, where $|B|$ corresponds to the number of instances in the bag $B$}. This section reports relative errors but also absolute errors, $AE(\mathbf{p},\hat{\mathbf{p}})=\frac{1}{|\labelspace|}\sum_{c_i\in\labelspace}|p(c_i)-\hat{p}(c_i)|$, because both have been found to be better suited for quantification evaluation than other measures (like, e.g., KLD), according to \citet{Sebastiani2020}.  
\blue{We have optimized all DNN methods to minimize the RAE loss, because this was the official evaluation measure}. 
\blue{As recalled from Section~\ref{sec:related}, most traditional quantification methods rely on the predictions of an underlying classifier. We use Logistic Regression in all cases.}
The hyperparameters of the classifier were optimized, \blue{independently for each quantification method, in terms of } RAE in the validation bags,
either by the LeQua organizers\footnote{\url{https://github.com/HLT-ISTI/QuaPy/tree/lequa2022}} (CC, PCC, ACC, PACC, HDy, QuaNet) or by ourselves (EMQ-BCTS, EMQ-NoCalib), using the QuaPy quantification library \citep{moreo2021quapy}. 

\begin{figure*}
  \centering
  \includegraphics[trim=0 20 0 20,clip,width=14cm]{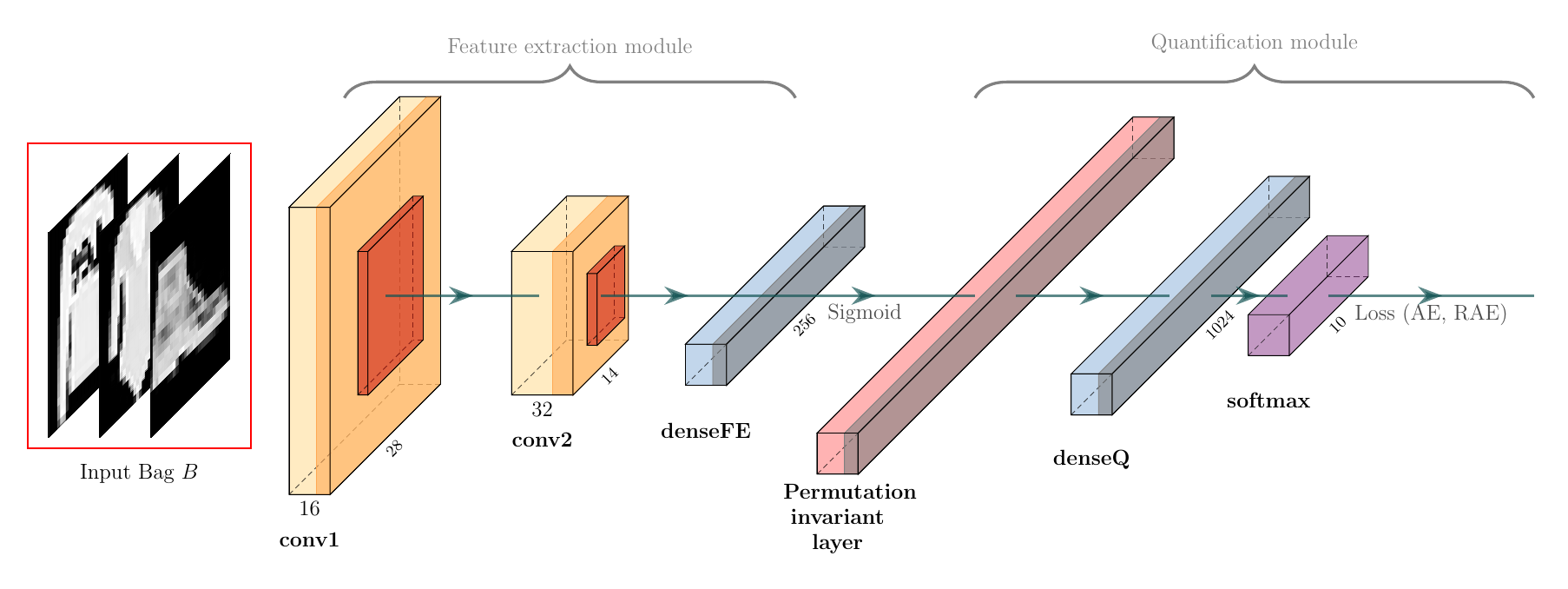}
  \caption{An example of the common architecture used for DNNs methods. The feature extraction layer and the layer sizes correspond to a computer vision problem (Fashion-MNIST dataset). DenseFE and denseQ are sequences of fully-connected layers used in the feature extraction module and in the quantification module, respectively.}
  \label{fig:architecture}
\end{figure*}
We also use the Fashion-MNIST dataset \citep{xiao2017fashionmnist} (a more challenging variant of the well-known MNIST) for the second experiment. In this case, the goal was to analyze the performance of symmetric approaches when the training data consists of individual labeled examples. 
The training set $D$ consists of 60,000 images while the test set consists of 10,000 images of 28x28 pixels. 
Both sets are labeled according to 10 classes. We use the APP protocol (Section~\ref{sec:app-protocol}) for: i) generating the training bags $D'$
for DNNs methods (500 bags 
with 500 examples 
for each epoch), and ii) evaluating all methods (5,000 test bags, 
of 500 examples 
each). To ensure the fairness of the experiment, all the methods used the same feature extraction module (described in Section~\ref{exp:architecture}). The quantifiers learned by DNN methods were optimized using AE or RAE depending on the loss function used. The classifier employed with quantification methods consists of a classification head on top of the feature extraction module, with a softmax activation function, optimized to minimize the cross-entropy loss. We applied early stopping on a validation set in order to prevent overfitting. The validation set was then used to generate the posterior probabilities on which some methods (ACC, PACC) estimate the misclassification rates, and in which EMQ-BCTS optimizes the calibration function.

\begin{table*}[th]
 \caption{Results for \textsc{LeQua-T1A}, \textsc{LeQua-T1B} and \textsc{Fashion-MNIST},  in terms of AE and RAE. 
 Methods that are not statistically significantly different from the best one (\textbf{bold}), according to a Wilcoxon signed-rank test, are marked with $\dag$ if $0.001 \le p\operatorname{-value} \le 0.05$ and with $\ddag$ if $p\operatorname{-value} > 0.05$. Missing values correspond to binary quantifiers in multiclass problems.}
 \label{tab:resultsall}
 \vskip 0.15in
 \centering
 \resizebox{\textwidth}{!}{%
 \centering { \begin{tabular}{r|cccccc}
\toprule
& \multicolumn{2}{c}{LeQua-T1A} & \multicolumn{2}{c}{LeQua-T1B}  & \multicolumn{2}{c}{Fashion-MNIST} \\ 
&AE & RAE & AE  & RAE & AE & RAE           \\
\midrule
CC  & 0.0916 $\pm$ 0.055 & 1.0840 $\pm$ 4.311& 0.0141 $\pm$ 0.003 & 1.8936 $\pm$ 1.187  & 0.0163 $\pm$ 0.007 & 0.5828 $\pm$ 0.723\\
PCC & 0.1166 $\pm$ 0.070 & 1.3940 $\pm$ 5.621 & 0.0171 $\pm$ 0.003 & 2.2646 $\pm$ 1.416 & 0.0204 $\pm$ 0.008 & 0.7817 $\pm$ 0.974\\
ACC  &0.0372 $\pm$ 0.029 & 0.1702 $\pm$ 0.508 & 0.0184 $\pm$ 0.004 & 1.4213 $\pm$ 1.270 & 0.0082 $\pm$ 0.003 & 0.2226 $\pm$ 0.238\\
PACC & 0.0298 $\pm$ 0.023 & 0.1522 $\pm$ 0.464 & 0.0158 $\pm$ 0.004 & 1.3054 $\pm$ 0.988 & 0.0067 $\pm$ 0.002 & 0.1831 $\pm$ 0.193\\
HDy  & 0.0281 $\pm$ 0.022 & 0.1451 $\pm$ 0.456 & - & - & - & -\\
QuaNet & 0.0342 $\pm$ 0.025 & 0.3176 $\pm$ 1.352 & - & - & - & - \\
EMQ-BCTS & 0.0269 $\pm$ 0.021 & 0.1183 $\pm$ 0.251 &0.0117 $\pm$ 0.003 & 0.9372 $\pm$ 0.817 & 0.0065 $\pm$ 0.002 & \textbf{0.1510 $\pm$ 0.152} \\
EMQ-NoCalib  & 0.0236 $\pm$ 0.018 & 0.1088 $\pm$ 0.267  & 0.0118 $\pm$ 0.003 & 0.8780 $\pm$ 0.751  & 0.0132 $\pm$ 0.005 & 0.2549 $\pm$ 0.222 \\
\hline
DeepSets (avg) & 0.0278 $\pm$ 0.021 & 0.1269 $\pm$ 0.228 & 0.0128 $\pm$ 0.004 & 0.9954 $\pm$ 0.658 & 0.0083 $\pm$ 0.003 & 0.3283 $\pm$ 0.233\\
DeepSets (med) & 0.0292 $\pm$ 0.023 & 0.1389 $\pm$ 0.256 & 0.0143 $\pm$ 0.004 & 0.8443 $\pm$ 0.543 & 0.0094 $\pm$ 0.003 & 0.7195 $\pm$ 0.586 \\
DeepSets (max) & 0.0499 $\pm$ 0.042 & 0.2183 $\pm$ 0.488 & 0.0277 $\pm$ 0.005 & 1.4646 $\pm$ 1.026 & 0.0219 $\pm$ 0.007 & 0.3520 $\pm$ 0.323\\
SetTransformers & \ddag{0.0225 $\pm$ 0.017} & \ddag{0.1096 $\pm$ 0.262} & 0.0385 $\pm$ 0.008 & 1.6748 $\pm$ 1.428 & 0.0104 $\pm$ 0.003  & 2.2017 $\pm$ 1.190 \\
HistNetQ (ours) & \textbf{0.0224 $\pm$ 0.017} & \textbf{0.1071 $\pm$ 0.233} & \textbf{0.0107 $\pm$ 0.004} & \textbf{0.7574 $\pm$ 0.489} & \textbf{0.0060 $\pm$ 0.002} & \ddag{0.1592 $\pm$ 0.171}\\
\bottomrule
\end{tabular}}}
\end{table*}



\begin{figure*}[t]
 \centering
 \begin{subfigure}[T]{\textwidth}
  \centering
  \includegraphics[trim={0 3.9cm 0 0},clip,width=\textwidth]{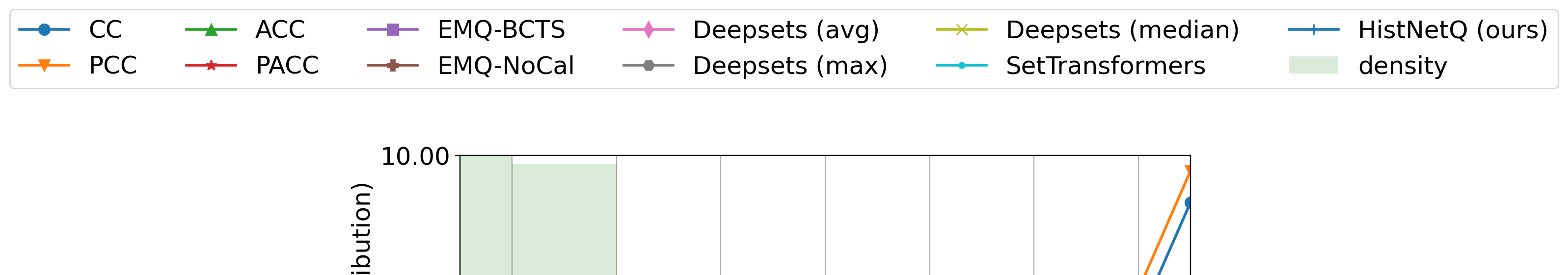}
 \end{subfigure}
 \begin{subfigure}[T]{0.33\textwidth}
  \centering
  \includegraphics[width=5.4cm]{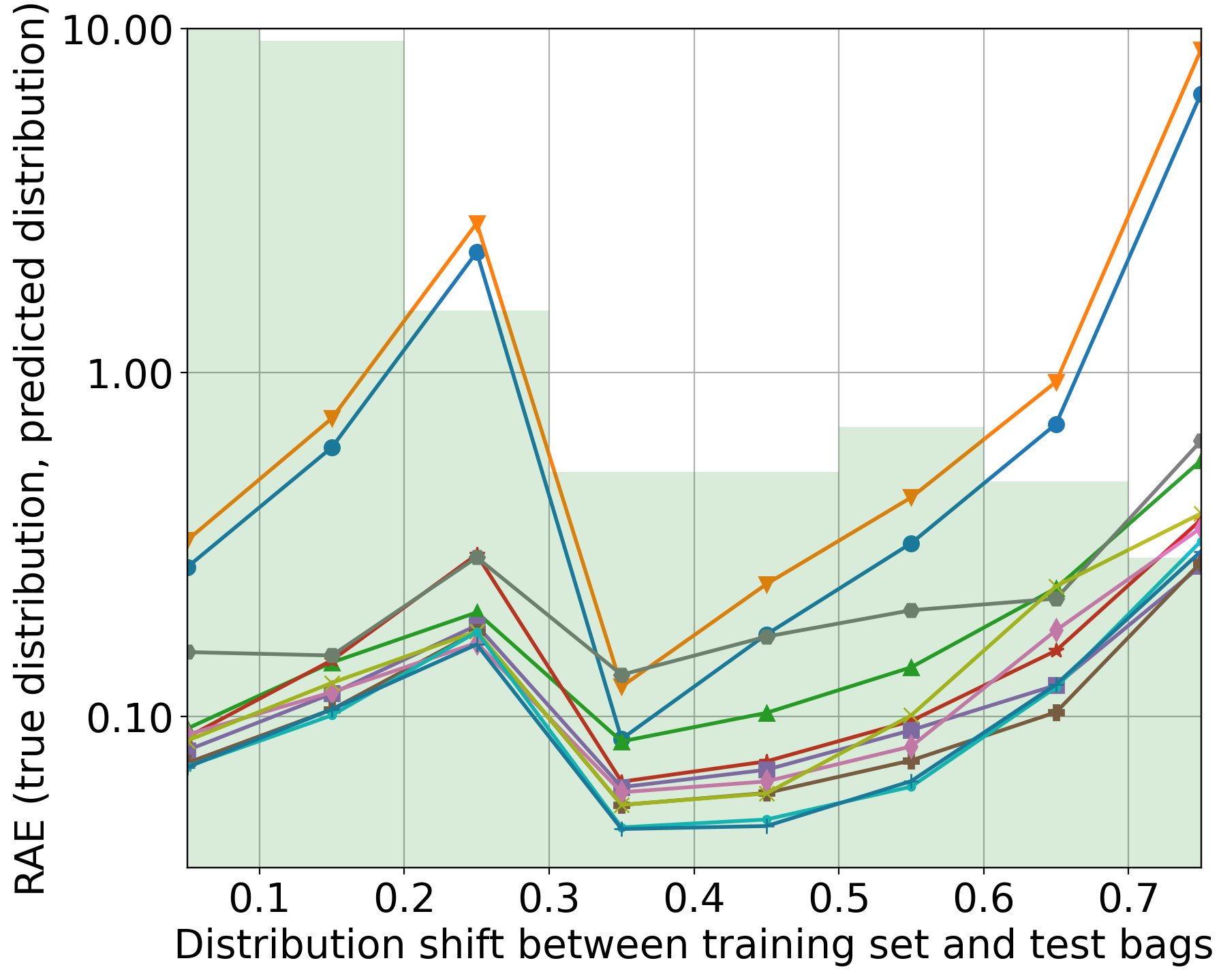}
  \caption{LeQua-T1A}
  \label{fig:t1a}
 \end{subfigure}
 \begin{subfigure}[T]{0.33\textwidth}
  \centering
  \includegraphics[trim = {0.9cm 0 0 0}, clip, width=5.4cm]{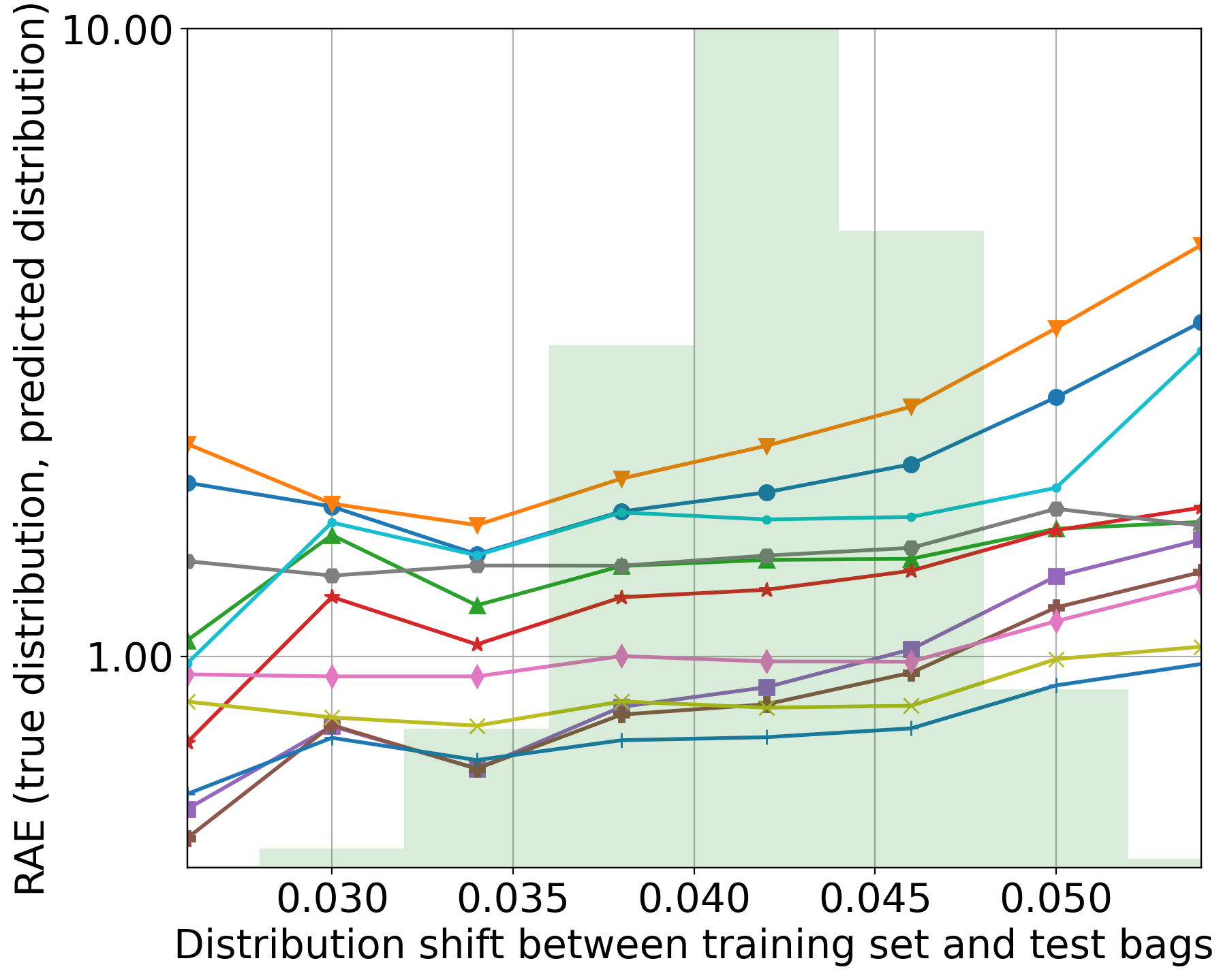}
  \caption{LeQua-T1B}
  \label{fig:t1b}
 \end{subfigure}
 \begin{subfigure}[T]{0.32\textwidth}
  \centering
  \includegraphics[trim = {0.9cm 0 0 0}, clip, width=5.4cm]{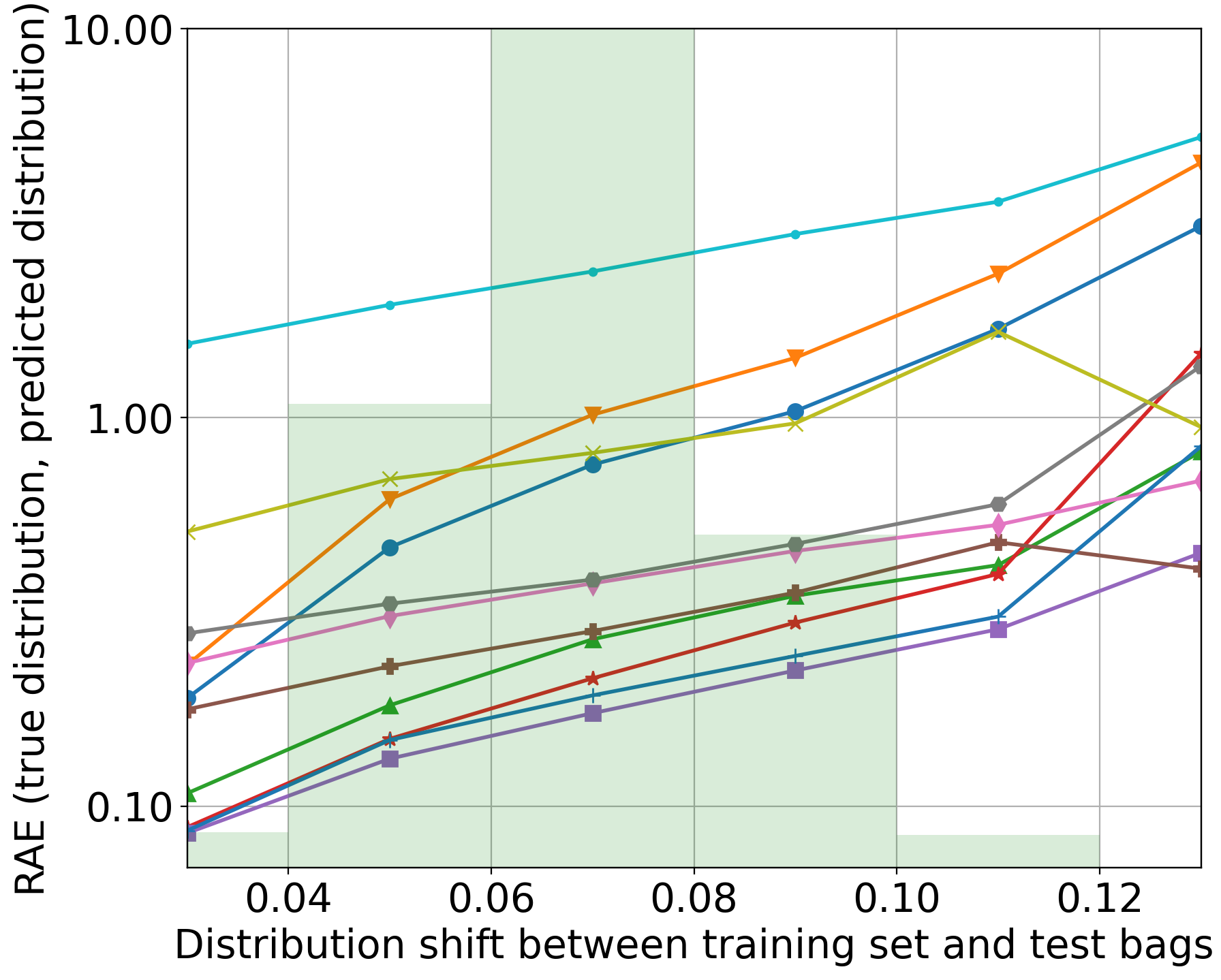}
  \caption{Fashion-MNIST}
  \label{fig:fashionmnist}
 \end{subfigure}
\caption{
Error distribution (measured in terms of RAE on a logarithmic scale) binned by the amount of prior probability shift ($|p_{D}-p_{B}|$) between the training set and each test bag. The green bars represent the distribution of bags per bin.}
\label{fig:t1errors}
\end{figure*}

\subsection{A Common Architecture}
\label{exp:architecture}

In order to guarantee a fair comparison, we used the exact same network architecture, depicted in Figure~\ref{fig:architecture}, for all methods, replacing only the permutation-invariant layer. The architecture is very similar to the ones previously proposed for set-based problems  \citep{lee2019set,Qi2021,zaheer2017deep}. The first part of the network is in charge of extracting features from the input examples. 
For the LeQua datasets, we used a series of fully-connected layers each followed by a LeakyReLU activation function and dropout, while for Fashion-MNIST we used a simple CNN with two convolutional layers and one fully-connected layer as output (Figure~\ref{fig:architecture}).
These vectors are given as input to a permutation-invariant layer that generates a single bag 
embedding.
The output of this layer is passed then through a feed-forward module followed by a softmax activation, which finally outputs 
a vector containing the estimated prevalence values $\hat{\mathbf{p}}$. 
In order to backpropagate the errors, at least one complete bag 
must be processed.

All DNN methods were trained following the same exact procedure: the training set $D'$ was split into an actual 
training set and a validation set used for monitoring the validation loss; we applied early stopping after 20 epochs without improvement in validation. 
Hyperparameters (see supplementary material) were tuned with the aid of \textsc{optuna} \citep{optuna_2019}.

\subsection{Results and discussion}

Table~\ref{tab:resultsall} and Figure~\ref{fig:t1errors} report the results of both experiments. In our opinion, the most important result is that HistNetQ outperforms EMQ in both LeQua competitions. This result is remarkable because EMQ is considered one of the best quantification methods in the literature \citep{alexandari2020maximum} and because it was the winner of both subtasks in the competition \citep{lequa2022}. The improvement obtained by HistNetQ is substantial in T1B (more than $13\%$). We think that this is due to two factors: (i) T1B is a multiclass problem in which it is difficult to accurately estimate the posterior probabilities (a crucial element for EMQ), and 
(ii) the performance of EMQ suffers as the shift between the training set and the test bags increases (see Figure~\ref{fig:t1b}). 

Regarding the comparison of DNN methods, the results show that representing bags using histograms (HistNetQ) brings about better quantification performance than when using SetTransformers or simple aggregation functions (DeepSets) \citep{Qi2021} across the three datasets, and the difference in performance seems to correlate with the complexity of the problem, \blue{with T1B standing out as the most challenging dataset among them}. We conjecture that this improvement comes from the fact that histogram-based representations are naturally geared toward ``counting'', and this turns beneficial for quantification. Interestingly, DeepSets(median) obtains the second-best RAE score in T1B. This may be surprising because it uses an apparently simplistic pooling layer. The performance of SetTransformers is erratic: it performs similarly, in a statistically significant sense, to HistNetQ in T1A, but it obtains the worst results from the deep learning lot in T1B. T1B is undeniably harder than T1A 
and SetTransformers' inducing points likely struggled to capture the interactions between all the classes. We were unable to make SetTranformers converge to better results in this case, despite trying many combinations of its hyperparameters (number of inducing points, number of heads, etc.). Although transformers are powerful tools in many contexts, they seem not to be the most adequate solution for quantification tasks where the order and the relation between examples in a bag are less important \blue{(this is in contrast to other types of data, such as in natural language processing, where transformers excel in learning from the order and relations between words)}.

Yet another aspect that proved essential for avoiding overfitting in all DNN methods is the Bag Mixer heuristic. We analyze this in more detail in Section~\ref{exp:ablation}.
Concerning HistNetQ, we also analyzed the extent to which the number of bins affects performance (see Table~\ref{tab:numberofbins}). 
We observe that in complex problems, like LeQua-T1B, the performance of HistNetQ tends to improve as the number of bins increases, leading to networks with a higher number of parameters. However, this is not \blue{necessarily} a rule of thumb, because in simpler problems, having too many bins might lead to overfitting. We would therefore recommend treating the number of bins just as any other hyperparameter to be tuned for each specific problem.



\begin{table*}[h]
 \caption{Results by number of bins in  LeQua-T1B}
 \label{tab:numberofbins}
 \vskip 0.15in
 \centering \begin{tabular}{r|rr}
\toprule
{} & {AE} & {RAE} \\
\midrule
HistNetQ ( 8 bins) & 0.0297 $\pm$ 0.008 & 1.2878 $\pm$ 1.000 \\
HistNetQ (16 bins) & 0.0212 $\pm$ 0.007 & 1.0572 $\pm$ 0.738 \\
HistNetQ (32 bins) & 0.0121 $\pm$ 0.005 & 0.7851 $\pm$ 0.520 \\
HistNetQ (64 bins) & \textbf{0.0107 $\pm$ 0.004} & \textbf{0.7574 $\pm$ 0.489} \\
\bottomrule
\end{tabular}
 \vskip -0.1in
\end{table*}


The results on Fashion-MNIST show that EMQ with calibration is the best approach, even while requiring less computational resources than DNN methods. According to the literature, these results were to be expected, but the performance of HistNetQ is rather similar and not significantly worse; it is even slightly better for AE. However, in this case, HistNetQ performs worse when the amount of shift is large (see Figure~\ref{fig:fashionmnist}). On the other hand, HistNetQ outperforms the rest of the quantification algorithms (only PACC gets close) as well as DNN methods also in this case. As witnessed in the first experiment, DeepSets using median or average polling layers prove more stable than SetTransformers, especially for RAE. These results seem to suggest that HistNetQ is competitive and should be considered even for problems in which only individual labeled examples 
are available. 

\subsection{Ablation Study}
\label{exp:ablation}

%
%

As recalled from Section~\ref{sec:sample-mixer}, the Bag Mixer is a data augmentation technique meant to enhance the training data of DNN symmetric quantifiers in order to avoid overfitting. 
In this section, we analyze the extent to which the Bag Mixer impacts the performance of each network. To do so, we carry out additional experiments in which the Bag Mixer is not used (that is, using only the training bags provided in $D'$), and we compare the results with the previously reported in Table~\ref{tab:resultsall}.

For this experiment, we have used the LeQua datasets because the training bags with their corresponding prevalence values are provided and thus limited. In contrast, in Fashion-MNIST, training bags are generated with the APP protocol using the labeled training dataset $D$ and therefore are \blue{practically} unlimited and different in every training iteration.

For LeQua-T1A, Figure~\ref{fig:overfitting} and Table~\ref{table:t1a_wo_sampmix} show that the networks exhibit lower training errors when operating with just 1000 training bags. However, this reduction in training error leads to a significant drawback, as these models tend to perform notably worse on the validation and holdout datasets due to overfitting.

\begin{table*}[ht]
\centering
\caption{LeQua-T1A results without Bag Mixer.
Relative error variation with respect to when using Bag Mixer (Table~\ref{tab:resultsall}) is shown in parenthesis.}
\label{table:t1a_wo_sampmix}
\begin{tabular}{l|ll}
\toprule
 & AE & RAE \\
\midrule
Deepsets (avg) & 0.0326 (+17.3\%) & 0.1469 (+15.8\%) \\
Deepsets (median) & 0.0416 (+42.5\%) & 0.1810 (+30.3\%) \\ 
Deepsets (max) & 0.0570 (+14.2\%) & 0.2287 (+4.8\%)\\

SetTransformers & 0.0368 (+63.6\%) & 0.1553 (+41.1\%) \\
HistNetQ (ours) & 0.0279 (+24.6\%) & 0.1265 (+18.1\%)\\
\bottomrule
\end{tabular}
\end{table*}

In the case of LeQua-T1B, Table~\ref{table:t1b_wo_sampmix}, the adverse effects of overfitting are amplified. This was to be expected, since the number of classes is much higher in this dataset (up to 28) while the number of training bags stays the same (i.e., 1000). In this scenario, the networks, especially those with more complex architectures containing a greater number of parameters, such as SetTranformers or HistNetQ, are more prone to overfit.

\begin{figure*}[t]
 \centering
 \begin{subfigure}[T]{0.48\textwidth}
  \centering
  \includegraphics[trim={3cm 3cm 3cm 3cm},clip,width=\textwidth]{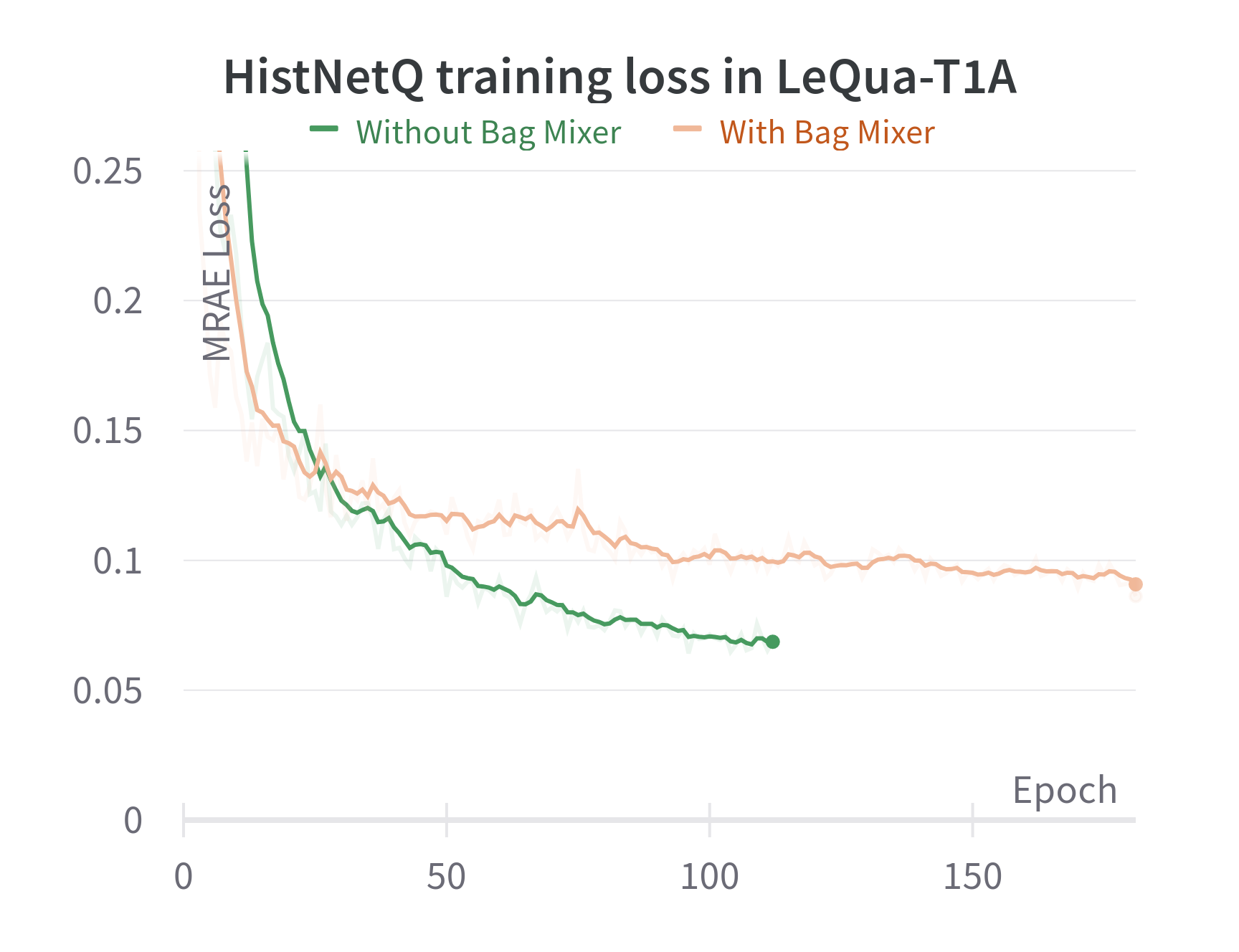}
  \label{fig:trainingloss}
 \end{subfigure}
 \hspace{0.02\textwidth}
 \begin{subfigure}[T]{0.48\textwidth}
  \centering
  \includegraphics[trim={3cm 3cm 3cm 3cm},clip,width=\textwidth]{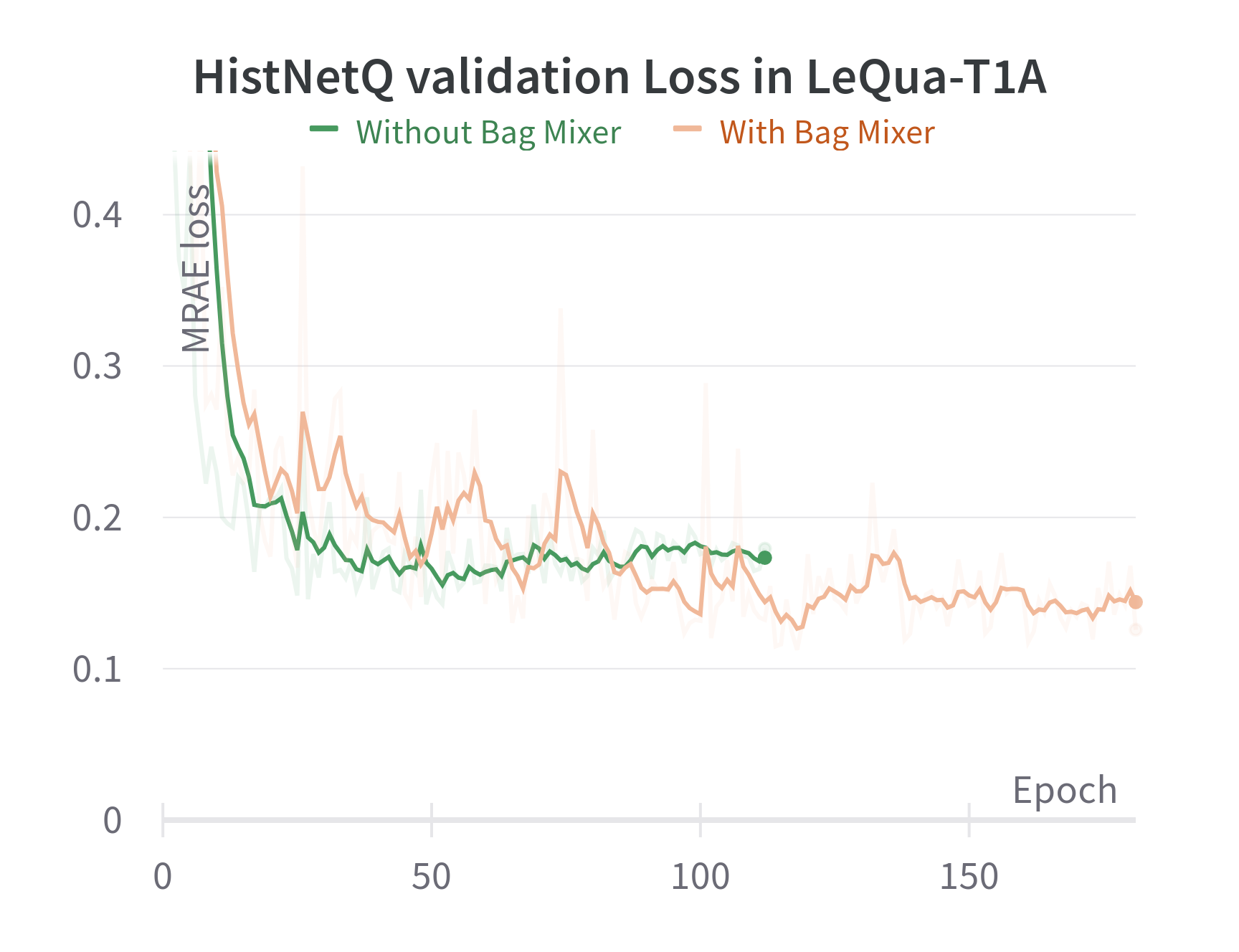}
  \label{fig:validationloss}
 \end{subfigure}
\caption{Training and validation loss trends of HistNetQ in LeQua-T1A, with and without the Bag Mixer using a patience criterion (i.e. stopping the training process after a number of consecutive epochs in which the validation loss does not improve).
The training loss decreases faster without the Bag Mixer (left figure); however, the validation loss keeps improving with the Bag Mixer (right figure).
This is an indication that the Bag Mixer helps counter overfitting. 
}
\label{fig:overfitting}
\end{figure*}

\begin{table*}[htbp]
\centering
\caption{LeQua-T1B results without Bag Mixer. 
Relative error variation with respect to when using the Bag Mixer (Table~\ref{tab:resultsall}) is shown in parenthesis.
}
\label{table:t1b_wo_sampmix}
\begin{tabular}{l|ll}
\toprule
 & AE & RAE \\
\midrule
Deepsets (avg) & 0.0449 (+250.8\%) & 1.5029 (+51\%) \\
Deepsets (median) & 0.0215 (+50.4\%) & 1.0991 (+30.2\%) \\
Deepsets (max) & 0.0200 (-27.8\%) & 1.5740 (+7.5\%) \\
SetTransformers & 0.0311 (-19.2\%) & 4.2416 (+153.3\%) \\
HistNetQ (ours) & 0.0445 (+315.9\%) & 1.5108 (+99.5\%) \\
\bottomrule
\end{tabular}
\end{table*}

\section{Conclusions}
\label{sec:conclusions}

This paper introduces HistNetQ, a DNN for quantification that relies on a permutation-invariant layer based on differentiable histograms.  We carried out experiments using two different quantification problems (from computer vision and text analysis) in which we compared the performance of HistNetQ against previously proposed networks for set processing and also against the most important algorithms from the quantification literature. The results show that HistNetQ achieved state-of-the-art performance in both problems. From a qualitative point of view, HistNetQ also displays interesting properties like i) the ability to directly learn from bags 
labeled by prevalence, which allows HistNetQ to be applied to scenarios in which traditional methods cannot; and ii) the possibility to directly optimize for specific loss functions. 

This research may hopefully offer a new viewpoint in quantification learning, since our results suggest that exploiting data labeled at the aggregate level might be preferable, in terms of quantification performance, than exploiting data labeled at the individual level. Overall, this study seems to suggest that HistNetQ is a promising alternative for implementing the symmetric approach in real applications, obtaining state-of-the art results that surpass previous approaches. 

Future work may include i) studying the capabilities of HistNetQ when confronted with types of dataset shift other than prior probability shift \citep{tasche2022class,zhang2013domain} and ii) exploring potential applications of this architecture to other problems that, like quantification, require learning a model from density estimates over sets of examples.

\bibliographystyle{unsrtnat}
\bibliography{histnet_bib} 
\clearpage
\appendix
\section{Other Types of Differentiable Histogram Layers}
\label{sec:app:architectures}

Differentiable histograms can be classified as belonging to the \emph{hard} or \emph{soft} binning types, depending on whether each value contributes only to the bin it belongs to or if instead, each value contributes to more than one bin (based on the distance of the value to the bin center and its width), respectively.

In our experiments, we have tested four different histograms proposed in the literature, all of them permutation-invariant and therefore suitable for set processing. In Section 5, we only reported results for the \emph{hard} variant that obtained slightly better results. 
The architecture of the remaining histogram types, along with the results we have obtained in our experiments, are discussed in this section.

\subsection{Architectures of Other Differentiable Histograms}
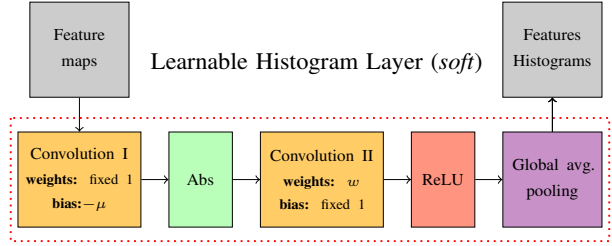
\begin{figure}[!ht]
\centering
\scalebox{0.9}{
\begin{tikzpicture}[
    node distance=5mm and 4 mm,
      start chain = going right,
every node/.style = {
    draw,
    minimum height=4em,
    text width=3.5em,
    align=center,
    join},
every join/.style = {->}
            ]
        \node[on chain,fill=\GrayColor] (featuremaps){\scriptsize Feature\\maps};
        \node[on chain,text width=4.5em]  (conv1)[fill=\ConvColor,below=of featuremaps]{\scriptsize Convolution I\\[0.5em] \tiny \textbf{weights:} fixed 1\\[0.5em]\textbf{bias:}$-\mu$\\};
        \node[on chain,fill=\AbsColor,text width=2em]  {\scriptsize Abs};
        \node[on chain,text width=4.5em]  (conv2)[fill=\ConvColor]{\scriptsize Convolution II\\[0.5em] \tiny \textbf{weights:} $w$\\[0.5em]\textbf{bias:} fixed 1\\};
        \node  [on chain,fill=\ReluColor,text width=2em]{\scriptsize ReLU};
        \node  [on chain,fill=\AvgColor](avg){\scriptsize Global avg. pooling};
        \node  [on chain,above=of avg,fill=\GrayColor]{\scriptsize Features Histograms};
        \draw[red,dotted,thick] ($(conv1.north west)+(-0.1,0.2)$)  rectangle ($(avg.south east)+(0.1,-0.2)$);
        \coordinate (CENTER) at ($(conv1)!0.5!(avg)$);
        \node  [draw=none,align=center,text width=20em,shift={(0,0.5)},above =of CENTER,]{Learnable Histogram Layer (\emph{soft})};
    \end{tikzpicture}}
    \caption{\emph{soft}: Learnable histogram layer with soft binning and variable bin centers and widths} \label{fig:histsoft}
\end{figure}
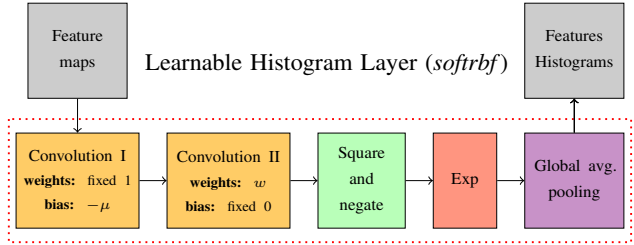
\begin{figure}[!ht]
\centering
\scalebox{0.9}{
\begin{tikzpicture}[
    node distance=5mm and 4 mm,
      start chain = going right,
every node/.style = {
    draw,
    minimum height=4em,
    text width=3.5em,
    align=center,
    join},
every join/.style = {->}
            ]
        \node[on chain,fill=\GrayColor] (featuremaps){\scriptsize Feature\\maps};
        \node[on chain,text width=4.5em]  (conv1)[fill=\ConvColor,below=of featuremaps]{\scriptsize Convolution I\\[0.5em] \tiny \textbf{weights:} fixed 1\\[0.5em]\textbf{bias: }$-\mu$\\};
        \node[on chain,text width=4.5em]  (conv2)[fill=\ConvColor]{\scriptsize Convolution II\\[0.5em] \tiny \textbf{weights:} $w$\\[0.5em]\textbf{bias:} fixed 0\\};
        \node[on chain,fill=\AbsColor,text width=3em](square) {\scriptsize Square and negate};
        \node  [on chain,fill=\ReluColor,text width=2em]{\scriptsize Exp};
        \node  [on chain,fill=\AvgColor](avg){\scriptsize Global avg. pooling};
        \node  [on chain,above=of avg,fill=\GrayColor]{\scriptsize Features Histograms};
        \draw[red,dotted,thick] ($(conv1.north west)+(-0.1,0.2)$)  rectangle ($(avg.south east)+(0.1,-0.2)$);
        \coordinate (CENTER) at ($(conv1)!0.5!(avg)$);
        \node  [draw=none,align=center,text width=20em,shift={(0,0.5)},above =of CENTER,]{Learnable Histogram Layer (\emph{softrbf})};
    \end{tikzpicture}}
\caption{\emph{softrbf}: Learnable histogram layer with soft binning and variable bin centers and widths using a RBF function}\label{fig:histsoftrbf}
\end{figure}

Given a bag of $n$ data examples $B=\{\mathbf{x}_i\}_{i=1}^{n}$, with $\mathbf{x}_i\in\inputspace$ 
as before, our goal is to compute a histogram for every feature vector $\{\mathbf{f}_i\}_{i=1}^z$, where $\mathbf{f}_k\in\mathbb{R}^n$ represents the values of the $k$-th feature across the instances $\mathbf{x}_i \in B$ in a latent space $\mathbb{R}^z$.

The \emph{soft} differentiable histogram layers, proposed by \cite{wang2016learnable}, use soft binning with variable bins and employ convolutional layers to approximate the histogram (see Figure~\ref{fig:histsoft}). 
The counts in the soft histograms are computed by:
\begin{equation}
    \phi(v; \mu, w)=\max{\left(0,1-\frac{1}{w}\times|v-\mu| \right)},
\end{equation}
where $\mu$ and $w$ are the bin center and width of the bin, and $v$ is one of the values generated for the $k$-th feature. The rationale behind this equation is that the closer the value $v$ gets to the bin center $\mu$, the smaller the multiplier becomes, thus returning a value close to 1. Analogously, a small value for the bin width $w$ results in larger multiplicative factors, thus making the final count closer to 0.

As shown in Figure~\ref{fig:histsoft}, a first convolution layer learns the bin centers through the bias term while a second convolutional layer learns the bin widths.

In the \emph{softrbf} differentiable histogram layers by \cite{peeples2021histogram}, an RBF function is used to approximate the histogram. Just like in the previous case, the histogram falls into the category of soft binning with variable bins. In this case, the counts are computed as:
\begin{equation}
    \phi(v; \mu, w)=e^{-\left(\frac{v-\mu}{w}\right)^2}.
\end{equation}
The parameters of this function are similarly learned through convolutions 
(see Figure~\ref{fig:histsoftrbf}).

Finally, the \emph{sigmoid} differentiable histogram layers do not use convolutional layers but two logistic functions to approximate each bin. In this case, the type of histogram produced is hard with fixed bin centers and widths (there are no learnable parameters in the layer). In the first step, the bin centers $\mu$ and the bin width $w$ are initialized, depending on the number of bins selected. Then, in a second step, two logistic functions are used to approximate which values fall in each bin (see Figure~\ref{fig:sigmoid}). This method can be easily computed using basic differentiable operations. 

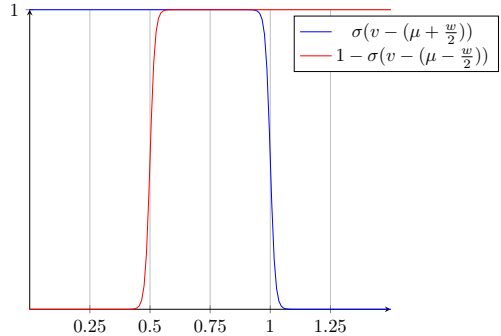
\begin{figure}[!ht]
\centering
\scalebox{0.7}{
\begin{tikzpicture}[declare function={
sigma1(\x)=1/(1+exp(100*(\x-(0.75+0.25))));
sigma2(\x)=1-(1/(1+exp(100*(\x-(0.75-0.25)))));}]
\begin{axis}%
[
    grid=major,     
    xmin=0,
    xmax=1.5,
    axis x line=bottom,
    ytick={0,1},
    xtick={0.25,0.5,0.75,1,1.25},
    ymax=1,
    axis y line=middle,
    samples=1000,
    domain=-5:5,
    legend style={at={(1.3,0.98)}}     
]
    \addplot[blue,mark=none]   (x,{sigma1(x)});
    \addplot[red,mark=none]   (x,{sigma2(x)});
    \legend{$\sigma(v-(\mu+\frac{w}{2}))$,$1-\sigma(v-(\mu-\frac{w}{2}))$}
\end{axis}
\end{tikzpicture}}
\caption{Histogram bin approximation using two sigmoid functions $\sigma(v)=\frac{1}{1+e^{\gamma v}}$. In this example, the bin center is fixed and equal to $\mu=0.75$. Bin width is also fixed with $w=0.5$. $\gamma$ is a constant with a high enough value to make sigmoid functions sharp and closer to a step function.}
\label{fig:sigmoid}
\end{figure}

\section{Additional Results}

In this section, we turn to report additional experiments we have carried out that were omitted from the paper for the sake of brevity.
In particular, we report results for the differentiable histograms (HistNetQ Soft, HistNetQ SoftRBF, and HistNetQ Sigmoid) discussed in the previous section.  The results are displayed in Table \ref{tab:resultsall_apendix}.



\begin{table*}[th]
 \caption{Results for \textsc{Fashion-MNIST}, \textsc{LeQua-T1A} and \textsc{LeQua-T1B} in terms of AE and RAE.}
 \label{tab:resultsall_apendix}
 \vskip 0.15in
 \resizebox{\textwidth}{!}{%
 \centering { 
 \begin{tabular}{r|cccccc}
\toprule
&\multicolumn{2}{c}{Fashion-MNIST}  & \multicolumn{2}{c}{LeQua-T1A} & \multicolumn{2}{c}{LeQua-T1B}  \\ 
&AE & RAE & AE  & RAE & AE & RAE           \\
\midrule
CC & 0.01634 $\pm$ 0.00738 & 0.58279 $\pm$ 0.72314 & 0.09160 $\pm$ 0.05540 & 1.08400 $\pm$ 4.31090& 0.01406 $\pm$ 0.00295 & 1.89365 $\pm$ 1.18732 \\
PCC & 0.02040 $\pm$ 0.00796 & 0.78168 $\pm$ 0.97434& 0.11664 $\pm$ 0.06978 & 1.39402 $\pm$ 5.62123 & 0.01711 $\pm$ 0.00332 & 2.26462 $\pm$ 1.41627 \\
ACC & 0.00824 $\pm$ 0.00310 & 0.22256 $\pm$ 0.23753& 0.03716 $\pm$ 0.02936 & 0.17020 $\pm$ 0.50800 & 0.01841 $\pm$ 0.00437 & 1.42134 $\pm$ 1.26971 \\
PACC & 0.00673 $\pm$ 0.00238 & 0.18310 $\pm$ 0.19252& 0.02985 $\pm$ 0.02258 & 0.15218 $\pm$ 0.46440 & 0.01578 $\pm$ 0.00379 & 1.30538 $\pm$ 0.98837 \\
HDy & - & - & 0.02814 $\pm$ 0.02212 & 0.14514 $\pm$ 0.45621 & - & -\\
QuaNet & - & - & 0.03418 $\pm$ 0.02528 & 0.31764 $\pm$ 1.35237 & - & - \\
EMQ-BCTS & 0.00652 $\pm$ 0.00246 & 0.15097 $\pm$ 0.15191 & 0.02689 $\pm$ 0.02094 & 0.11828 $\pm$ 0.25065  & 0.01174 $\pm$ 0.00305 & 0.93721 $\pm$ 0.81732  \\
EMQ-NoCalib & 0.01324 $\pm$ 0.00472 & 0.25493 $\pm$ 0.22246 & 0.02359 $\pm$ 0.01845 & 0.10878 $\pm$ 0.26668  & 0.01177 $\pm$ 0.00285 & 0.87802 $\pm$ 0.75120  \\
\hline
DeepSets (avg) & 0.00829 $\pm$ 0.00254 & 0.32826 $\pm$ 0.23251& 0.02779 $\pm$ 0.02105 & 0.12686 $\pm$ 0.22817 & 0.01283 $\pm$ 0.00379 & 0.99542 $\pm$ 0.65778 \\
DeepSets (median) & 0.00942 $\pm$ 0.00288 & 0.71946 $\pm$ 0.58579& 0.02919 $\pm$ 0.02273 & 0.13887 $\pm$ 0.25631 & 0.01429 $\pm$ 0.00432 & 0.84427 $\pm$ 0.54286 \\
DeepSets (max) & 0.02185 $\pm$ 0.00699 & 0.35195 $\pm$ 0.32316& 0.04991 $\pm$ 0.04167 & 0.21830 $\pm$ 0.48828 & 0.02766 $\pm$ 0.00515 & 1.46464 $\pm$ 1.02644 \\
SetTransformers & 0.01043 $\pm$ 0.00328  & 2.20175 $\pm$ 1.19007 & \ddag{0.02246 $\pm$ 0.01717} & \ddag{0.10958 $\pm$ 0.26205} & 0.03847 $\pm$ 0.00779 & 1.67475 $\pm$ 1.42750 \\
HistNetQ Hard (ours) & \textbf{0.00602 $\pm$ 0.00206} & 0.15923 $\pm$ 0.17085 & 0.02236 $\pm$ 0.01709 & \textbf{0.10707 $\pm$ 0.23312} & \textbf{0.01070 $\pm$ 0.00367} & \textbf{0.75739 $\pm$ 0.48891} \\
HistNetQ Soft (ours) & 0.00842 $\pm$ 0.00273 & 0.16616 $\pm$ 0.14238 & 0.02279 $\pm$ 0.01763 & \dag{0.10830 $\pm$ 0.22461} & 0.01846 $\pm$ 0.00671 & 0.94806 $\pm$ 0.58838 \\
HistNetQ SoftRBF (ours) & 0.00688 $\pm$ 0.00216 & \textbf{0.13601 $\pm$ 0.11244} & \dag{0.02257 $\pm$ 0.01729} & \ddag{0.11250 $\pm$ 0.28344} & 0.02095 $\pm$ 0.00688 & 1.05116 $\pm$ 0.66311\\
HistNetQ Sigmoid (ours) & 0.00758 $\pm$ 0.00237 & 0.69029 $\pm$ 0.50995 & \textbf{0.02197 $\pm$ 0.01746} & 0.10728 $\pm$ 0.27898 & 0.01855 $\pm$ 0.00630 & 0.99868 $\pm$ 0.62058 \\
\bottomrule
\end{tabular}}}
\end{table*}

These results show that the differences in performance between the histogram-based models are rather small for Fashion-MNIST (in which SoftRBF seems to work better in terms of RAE), and in LeQua-T1A (in a statistically significant sense). However, for LeQua-T1B (the hardest problem in terms of the number of classes), HistNetQ Hard clearly stands out as the best of the lot. It is worth noting that these results, obtained in the test sets, align well with the trends each method displays in the validation loss. Indeed, our preference for HistNetQ Hard over the rest of the methods is based on the observation that HistNetQ Hard displays the smallest validation loss overall --that is to say, we have not simply picked the method displaying the best results in test.

\newpage
\section{Hyper-parameter Selection}


\begin{table*}[ht]
\caption{Summary of the most important hyperparameters used for each task. The last four hyperparameters are specific for SetTransformers}
\vskip 0.15in
\centering
\resizebox{\textwidth}{!}{%
\begin{tabular}{rp{6cm}ccc}
\toprule
Hyperparam. & \centering Description & Fashion-MNIST & LeQua-T1A & LeQua-T1B \\
\midrule
$lr$ & Starting learning rate & 0.0003 & 0.0001 & 0.0005 \\
\emph{optimizer} & Optimizer used for training & AdamW & AdamW & AdamW \\
\emph{batch\_size} & Number of full bags passed through the network before updating the weights & 2 & 20 & 500 \\
$wd$ & Weight decay & 0 & 0.00001 & 0.00001 \\
$N$ & Number of bins used in the histograms & 32 & 32 & 64 \\
$R$ & Real bags proportion used by the Bag Mixer at each epoch & - & 0.9 & 0.5 \\
$z$ & Output size of the feature extraction layer & 256 & 300 & 512 \\
$FF\_Q$ & Number and size of linear layers in the quantification head & [1024] & [2048, 2048, 2048] & [4096] \\
\emph{dropout} & Dropout used in quantification module linear layers & 0.1 & 0.5 & 0.5 \\
\hline
$O$ & Output size for SetTransfomer & 512 & 512 & 512 \\
$I$ & Number of inducing points in SetTransfomer & 32 & 32 & 128 \\
$H$ & Hidden size in SetTransfomer & 256 & 256 & 256 \\
$nh$ & Number of heads in SetTransfomer & 4 & 4 & 4 \\
\bottomrule
\end{tabular}
}%
\label{tab:hyperparameters}
\end{table*}

Hyper-parameters for the different tasks were optimized with the help of \textsc{optuna} \citep{optuna_2019} for the cases in which it was computationally feasible (Fashion-MNIST and LeQua-T1A). Table~\ref{tab:hyperparameters} summarizes the configuration shown for the results presented in our paper.







\end{document}